\newtheorem{lemma}{Lemma}
\newtheorem{theorem}[lemma]{Theorem}
\newtheorem{defn}[lemma]{Definition}
\providecommand{\customgenericname}{}
\newcommand{\newcustomtheorem}[2]{%
  \newenvironment{#1}[1]
  {%
   \renewcommand\customgenericname{#2}%
   \renewcommand\theinnercustomgeneric{##1}%
   \innercustomgeneric
  }
  {\endinnercustomgeneric}
}
\par\vspace{4mm}}
\mathchardef\dash="2D
\newcommand{\counterfactual}[1]{\ensuremath{%
  \ {\Box}_{#1}\kern-6pt
    \raise1pt\hbox{$\mathord{\longrightarrow}$\ }}}
\DeclareMathOperator*{\argmin}{arg\!\min}
\newcommand{\Dist}[1]{\mathbf{\Delta}(#1)}
\DeclareMathOperator*{\argmax}{arg\!\max}
\newcommand{\EO}{\mathop{\mathbb{E}}}
\newcommand{\Specialize}[2]{{#1}^{#2}}
\newcommand{\PPOMDP}{\mathcal{M}}
\newcommand{\As}{A}
\newcommand{\Os}{O}
\newcommand{\Ss}[1]{\Specialize{S}{#1}}
\newcommand{\Tf}[1]{\Specialize{\mathcal{T}}{#1}}
\newcommand{\Of}[1]{\Specialize{\mathcal{I}}{#1}}
\newcommand{\Rf}[1]{\Specialize{\mathcal{R}}{#1}}
\newcommand{\discount}{\gamma}
\newcommand{\Uf}[1]{\Specialize{U}{#1}}
\newcommand{\Ns}{\Theta}
\newcommand{\apply}[2]{#1_{#2}}
\title{Emergent Complexity and Zero-shot Transfer via Unsupervised Environment Design}
\author{%
  Michael Dennis\thanks{Equal contribution},~ Natasha Jaques$^{*2}$, Eugene Vinitsky, Alexandre Bayen, \\
  \textbf{Stuart Russell, Andrew Critch, Sergey Levine} \\
  University of California Berkeley AI Research (BAIR), Berkeley, CA, 94704\\
  $^2$Google Research, Brain team, Mountain View, CA, 94043 \\
  \texttt{michael\_dennis@berkeley.edu, natashajaques@google.com,} \\
  \texttt{$<$evinitsky, bayen, russell, critch, svlevine$>$@berkeley.edu} \\
}
\begin{document}
\setcitestyle{numbers}

\maketitle

\begin{abstract}
A wide range of reinforcement learning (RL) problems --- including robustness, transfer learning, unsupervised RL, and emergent complexity --- require specifying a distribution of tasks or environments in which a policy will be trained.  However, creating a useful distribution of environments is error prone, and takes a significant amount of developer time and effort. We propose Unsupervised Environment Design (UED) as an alternative paradigm, where developers provide environments with unknown parameters, and these parameters are used to automatically produce a distribution over valid, solvable environments. Existing approaches to automatically generating environments suffer from common failure modes: domain randomization cannot generate structure or adapt the difficulty of the environment to the agent's learning progress, and minimax adversarial training leads to worst-case environments that are often unsolvable.
To generate structured, solvable environments for our \textit{protagonist} agent, we introduce a second, \textit{antagonist} agent that is allied with the \textit{environment-generating adversary}. The adversary is motivated to generate environments which maximize regret, defined as the difference between the protagonist and antagonist agent's return. We call our technique Protagonist Antagonist Induced Regret Environment Design (PAIRED).
Our experiments demonstrate that PAIRED produces a natural curriculum of increasingly complex environments, and PAIRED agents achieve higher zero-shot transfer performance when tested in highly novel environments.
\end{abstract}

\section{Introduction}

Many reinforcement learning problems require designing a distribution of tasks and environments that can be used to evaluate and train effective policies. This is true for a diverse array of methods including transfer learning (e.g., \citep{antonova2017reinforcement,yu2017preparing,sadeghi2016cad2rl,tobin2017domain}), robust RL (e.g., \citep{bagnell2001solving, iyengar2005robust,pinto2017supervision}), unsupervised RL (e.g., \citep{gupta2018unsupervised}), and emergent complexity (e.g., \citep{sukhbaatar2017intrinsic,wang2019paired,wang2020enhanced}). For example, suppose we wish to train a robot in simulation to pick up objects from a bin in a real-world warehouse. There are many possible configurations of objects, including objects being stacked on top of each other. We may not know \textit{a priori} the typical arrangement of the objects, but can naturally describe the simulated environment as having a distribution over the object positions.

However, designing an appropriate distribution of environments is challenging.  The real world is complicated, and correctly enumerating all of the edge cases relevant to an application could be impractical or impossible. Even if the developer of the RL method knew every edge case, specifying this distribution could take a significant amount of time and effort.  We want to automate this process.

\captionsetup[sub]{font=small}
\begin{figure}
\centering
\begin{subfigure}{.24\textwidth}
  \centering
  \includegraphics[width=.9\linewidth]{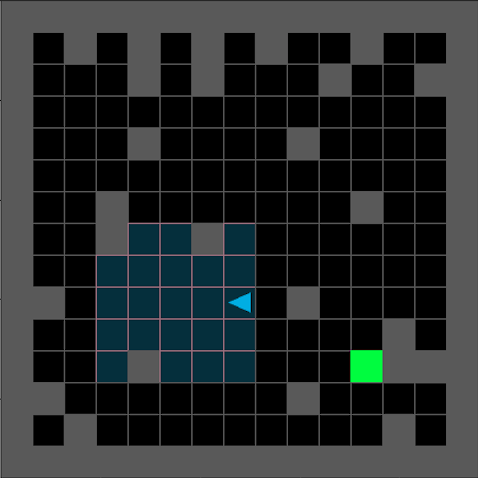}
  \caption{\footnotesize{Domain Randomization}}
  \label{fig:ex_domain_rand}
\end{subfigure}%
\begin{subfigure}{.239\textwidth}
  \centering
  \includegraphics[width=.9\linewidth]{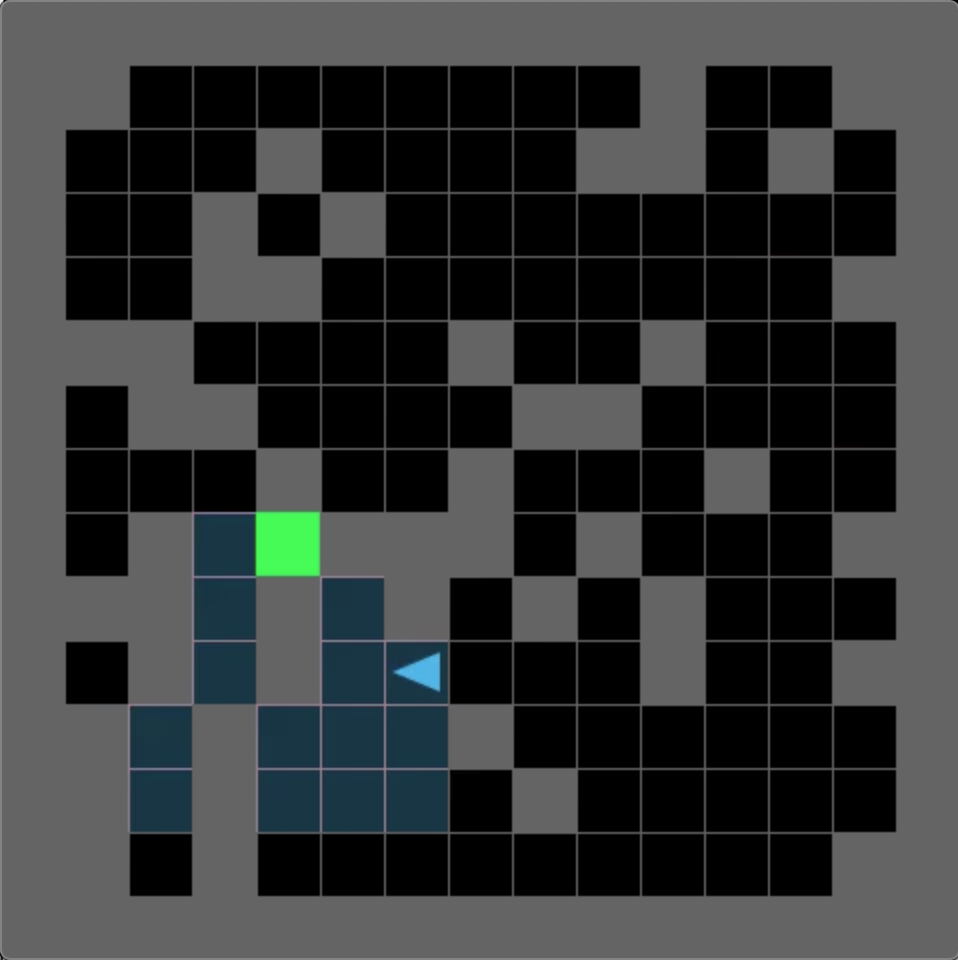}
  \caption{Minimax Adversarial}
  \label{fig:ex_unconstrained}
\end{subfigure}%
\begin{subfigure}{.24\textwidth}
  \centering
  \includegraphics[width=.9\linewidth]{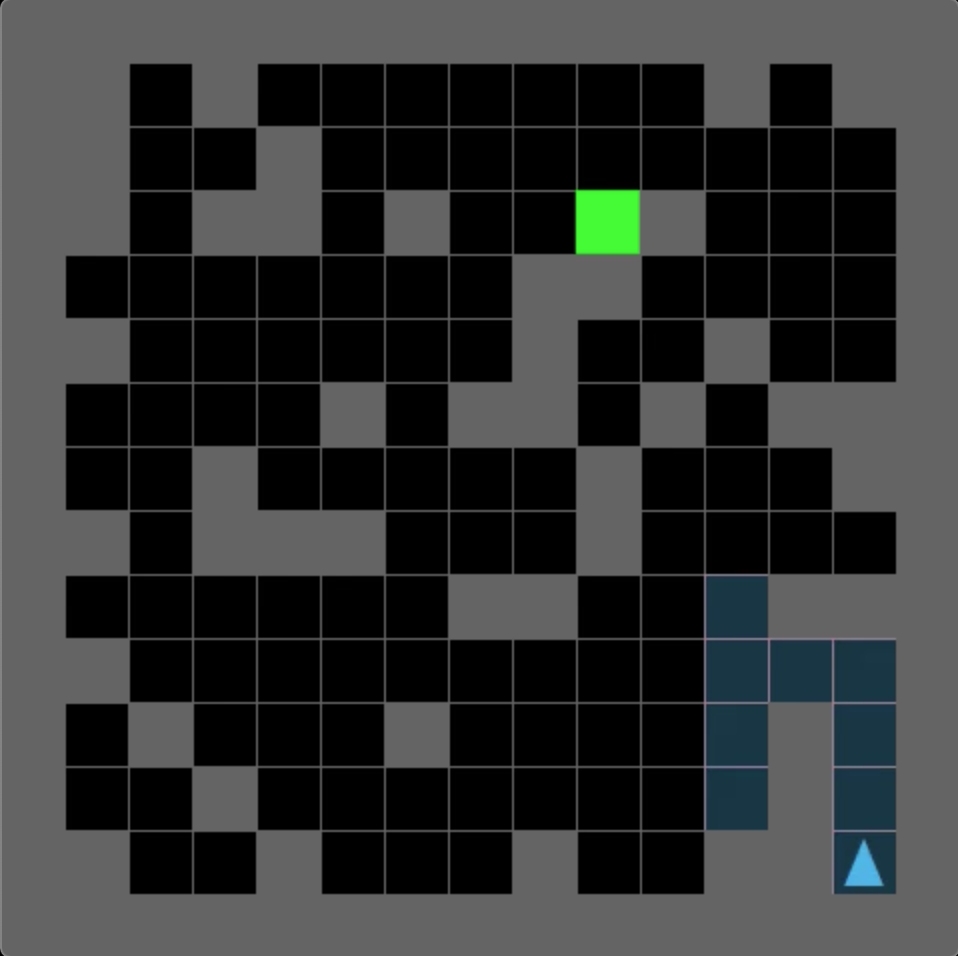}
  \caption{PAIRED (ours)}
  \label{fig:ex_PAIRED}
\end{subfigure}%
\begin{subfigure}{.24\textwidth}
  \centering
  \includegraphics[width=.95\linewidth]{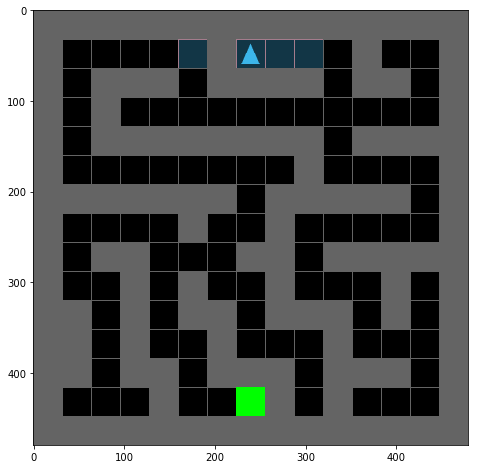}
  \caption{Transfer task}
  \label{fig:ex_maze}
\end{subfigure}
\caption{An agent learns to navigate an environment
where the position of the goal and obstacles is an underspecified parameter. If trained using domain randomization to randomly choose the obstacle locations (a), the agent will fail to generalize to a specific or complex configuration of obstacles, such as a maze (d). Minimax adversarial training encourages the adversary to create impossible environments, as shown in (b). In contrast, Protagonist Antagonist Induced Regret Environment Design (PAIRED),
trains the adversary based on the difference between the reward of the agent (protagonist) and the reward of a second, antagonist agent. Because the two agents are learning together, the adversary is motivated to create a curriculum of difficult but achievable environments tailored to the agents' current level of performance (c). PAIRED facilitates learning complex behaviors and policies that perform well under zero-shot transfer to challenging new environments at test time.}
\vspace{-0.3cm}
\label{fig:example_grids}
\end{figure}

In order for this automation to be useful, there must be a way of specifying the domain of environments in which the policy should be trained, without needing to fully specify the distribution.
In our approach, developers only need to supply an \emph{underspecified environment}: an environment which has free parameters which control its features and behavior. For instance, the developer could give a navigation environment in which the agent's objective is to navigate to the goal and the free parameters are the positions of obstacles.  Our method will then construct distributions of environments by providing a distribution of settings of these free parameters; in this case, positions for those blocks.  We call this problem of taking the underspecified environment and a policy,
and producing an interesting distribution of fully specified environments in which that policy can be further trained \emph{Unsupervised Environment Design} (UED).  We formalize unsupervised environment design in Section \ref{formalisms}, providing a characterization of the space of possible approaches which subsumes prior work. After training a policy in a distribution of environments generated by UED, we arrive at an updated policy and can use UED to generate more environments in which the updated policy can be trained.  In this way, an approach for UED naturally gives rise to an approach for Unsupervised Curriculum Design (UCD).  This method can be used to generate capable policies through increasingly complex environments targeted at the policy's current abilities.

Two prior approaches
to UED are domain randomization, which generates fully specified environments uniformly randomly regardless of the current policy (e.g., \citep{jakobi1997evolutionary,sadeghi2016cad2rl,tobin2017domain}), and adversarially generating environments to minimize the reward of the current policy; \textit{i.e.} minimax training (e.g., \citep{pinto2017supervision, morimoto2001robust,vinitsky2020dmalt,khirodkar2018vadra}).
While each of these approaches have their place, they can each fail to generate any interesting environments. In Figure \ref{fig:example_grids} we show examples of maze navigation environments generated by each of these techniques.  Uniformly random environments will often fail to generate interesting structures; in the maze example, it will be unlikely to generate walls (Figure \ref{fig:ex_domain_rand}).
On the other extreme, a minimax adversary is incentivized to make the environments completely unsolvable, generating mazes with unreachable goals (Figure \ref{fig:ex_unconstrained}).  In many environments, both of these methods fail to generate structured and solvable environments. We present a middle ground, generating environments which maximize regret, which produces difficult but solvable tasks (Figure \ref{fig:ex_PAIRED}). Our results show that optimizing regret results in agents that are able to perform difficult transfer task (Figure \ref{fig:ex_maze}), which are not possible using the other two techniques.  

We propose a novel adversarial training technique which naturally solves the problem of the adversary generating unsolvable environments by introducing an \emph{antagonist} which is allied with the \emph{environment-generating adversary}.  For the sake of clarity, we refer to the primary agent we are trying to train as the \emph{protagonist}.  The environment adversary's goal is to design environments in which the antagonist achieves high reward and the protagonist receives low reward.  If the adversary generates unsolvable environments, the antagonist and protagonist would perform the same and the adversary would get a score of zero, but if the adversary finds environments the antagonist solves and the protagonist does not solve, the adversary achieves a positive score. Thus, the environment adversary is incentivized to create challenging but \textit{feasible} environments, in which the antagonist can outperform the protagonist. Moreover, as the protagonist learns to solves the simple environments, the antagonist must generate more complex environments to make the protagonist fail, increasing the complexity of the generated tasks and leading to automatic curriculum generation.

We show that when both teams reach a Nash equilibrium, the generated environments have the highest regret, in which the performance of the protagonist's policy most differs from that of the optimal policy. 
Thus, we call our approach \emph{Protagonist Antagonist Induced Regret Environment Design (PAIRED)}. Our results demonstrate that compared to domain randomization, minimax adversarial training, and population-based minimax training (as in \citet{wang2019paired}), PAIRED agents learn more complex behaviors, and have higher zero-shot transfer performance in challenging, novel environments.

In summary, our main contributions are: a) formalizing the problem of Unsupervised Environment Design (UED) (Section~\ref{formalisms}), b) introducing the PAIRED Algorithm (Section~\ref{sec:paired}), c) demonstrating PAIRED leads to more complex behavior and more effective zero-shot transfer to new environments than existing approaches to environment design (Section~\ref{sec:results}), and d) characterizing solutions to UED and connecting the framework to classical decision theory (Section~\ref{formalisms}).

\section{Related Work}
When proposing an approach to UED we are essentially making a decision about which environments to prioritize during training, based on an underspecified environment set provided by the designer.  Making decisions in situations of extreme uncertainty has been studied in decision theory under the name ``decisions under ignorance''~\citep{peterson2017introduction} and is in fact deeply connected to our work, as we detail in Section \ref{formalisms}. \citet{milnor1954chapter} characterize several of these approaches; 
using Milnor's terminology, we see domain randomization \citep{jakobi1997evolutionary} as ``the principle of insufficient reason'' proposed by Laplace, minimax adversarial training \citep{khirodkar2018vadra} as the ``maximin principle'' proposed by Wald~\citep{wald1950statistical}, and our approach, PAIRED, as ``minimax regret'' proposed by Savage~\citep{savage1951theory}. Savage's approach was built into a theory of making sequential decisions which minimize worst case regret~\citep{robbins1952some, lai1985asymptotically}, which eventually developed into modern literature on minimizing worst case regret in bandit settings \citep{bubeck2012regret}.

Multi-agent training has been proposed as a way to automatically generate curricula in RL~\citep{leibo2019autocurricula,matiisen2019teacher,graves2017automated}. Competition can drive the emergence of complex skills, even some that the developer did not anticipate \citep{baker2019emergent,gleave2019adversarial}. Asymmetric Self Play (ASP) \citep{sukhbaatar2017intrinsic} trains a demonstrator agent to complete the simplest possible task that a learner cannot yet complete, ensuring that the task is, in principle, solvable. In contrast, PAIRED is significantly more general because it generates complex new environments, rather than trajectories within an existing, limited environment. \citet{campero2020learning} study curriculum learning in a similar environment to the one under investigation in this paper. They use a teacher to propose goal locations to which the agent must navigate, and the teacher's reward is computed based on whether the agent takes more or less steps than a threshold value. To create a curriculum, the threshold is linearly increased over the course of training. POET \citep{wang2019paired,wang2020enhanced} 
uses a population of minimax (rather than minimax \emph{regret}) adversaries to generate the terrain for a 2D walker. 
However, POET \cite{wang2019paired} requires generating many new environments, testing all agents within each one, and discarding environments based on a manually chosen reward threshold, which wastes a significant amount of computation. 
In contrast, our minimax regret approach automatically learns to tune the difficulty of the environment by comparing the protagonist and antagonist's scores. In addition, these works do not investigate whether adversarial environment generation can provide enhanced generalization when agents are transferred to new environments, as we do in this paper. 

Environment design has also been investigated within the symbolic AI community. The problem of modifying an environment to influence an agent's decisions is analyzed in \citet{zhang2009general}, and was later extended through the concept of Value-based Policy Teaching \cite{zhang2008value}.
\citet{keren2017equi,keren2019efficient} use a heuristic best-first search to redesign an MDP to maximize expected value.

We evaluated PAIRED in terms of its ability to learn policies that transfer to unseen, hand-designed test environments.
One approach to produce policies which can transfer between environments is unsupervised RL (e.g., \citep{schmidhuber2010formal,sermanet2018time}). 
\citet{gupta2018unsupervised} propose an unsupervised meta-RL technique that computes minimax regret against a diverse task distribution learned with the DIAYN algorithm \citep{eysenbach2018diversity}. However, this algorithm does not learn to modify the environment and does not adapt the task distribution based on the learning progress of the agent. PAIRED provides a curriculum of increasingly difficult environments and allows us to provide a theoretical characterization of when minimax regret should be preferred over other approaches.

The robust control literature has made use of both minimax and regret objectives.
Minimax training has been used in the controls literature~\citep{slotine1987adaptive, aastrom1983theory,zhou1996robust}, the Robust MDP literature~\citep{bagnell2001solving, iyengar2005robust,nilim2005robust,tamar2014scaling}, and more recently, through the use of adversarial RL training~\cite{pinto2017supervision,vinitsky2020dmalt,morimoto2001robust}. 
Unlike our algorithm, minimax adversaries have no incentive to guide the learning progress of the agent and can make environments arbitrarily hard or unsolvable. 
\citet{ghavamzadeh2016safe} minimize the regret of a model-based policy against a safe baseline to ensure safe policy improvement. \citet{regan2011robust,regan2012regret} study Markov Decision Processes (MDPs) in which the reward function is not fully specified, and use minimax regret as an objective to guide the elicitation of user preferences. While these approaches use similar theoretical techniques to ours, they use analytical solution methods that do not scale to the type of deep learning approach used in this paper, do not attempt to learn to generate environments, and do not consider automatic curriculum generation.

Domain randomization (DR) \citep{jakobi1997evolutionary} is an alternative approach in which a designer specifies a set of parameters to which the policy should be robust \citep{sadeghi2016cad2rl,tobin2017domain,tan2018sim}. These parameters are then randomly sampled for each episode, and a policy is trained that performs well on average across parameter values. However, this does not guarantee good performance on a specific, challenging configuration of parameters. While DR has had empirical success \citep{andrychowicz2020learning}, it requires careful parameterization and does not automatically tailor generated environments to the current proficiency of the learning agent. \citet{mehta2020active} propose to enhance DR by learning which parameter values lead to the biggest decrease in performance compared to a reference environment.

\section{Unsupervised Environment Design}
\label{formalisms}

The goal of this work is to construct a policy that performs well across a large set of environments. We train policies by starting with an initially random policy, generating environments based on that policy to best suit its continued learning, train that policy in the generated environments, and repeat until the policy converges or we run out of resources. We then test the trained policies in a set of challenging transfer tasks not provided during training.  In this section, we will focus on the environment generation step, which we call \emph{unsupervised environment design} (UED).  We will formally define UED as the problem of using an underspecified environment
to produce a distribution over fully specified environments, which supports the continued learning of a particular policy.  
To do this, we must formally define fully specified and underspecified environments, and describe how to create a distribution of environments using UED. We will end this section by proposing minimax regret as a novel approach to UED.

We will model our fully specified environments with a Partially Observable Markov Decision Process (POMDP), which is a tuple $\langle \As,\Os,\Ss{}, \Tf{},\Of{},\Rf{},\discount \rangle$ where $\As$ is a set of actions, $\Os$ is a set of observations, $\Ss{}$ is a set of states, $\Tf{}: S \times A \rightarrow \Dist{S}$ is a transition function, $\Of{}: S \rightarrow O$ is an observation (or inspection) function, $\Rf{}: S \rightarrow \mathbb{R}$, and $\discount$ is a discount factor.  We will define the utility as $\Uf{}(\pi) = \sum_{i=0}^{T} r_t\gamma^t$, where $T$ is a horizon.

To model an underspecified environment, we propose the Underspecified Partially Observable Markov Decision Process (UPOMDP) as a tuple $\PPOMDP = \langle \As,\Os, \Ns, \Ss{\PPOMDP}, \Tf{\PPOMDP},\Of{\PPOMDP},\Rf{\PPOMDP},\discount \rangle$. 
The only difference between a POMDP and a UPOMDP is that a UPOMDP has a set $\Ns$ representing the free parameters of the environment, which can be chosen to be distinct at each time step and are incorporated into the transition function as $\Tf{\PPOMDP}: S \times A \times \Ns \rightarrow \Dist{S}$.  Thus a possible setting of the environment is given by some trajectory of environment parameters $\vec{\theta}$.  As an example UPOMDP, consider a simulation of a robot in which $\vec{\theta}$ are additional forces which can be applied at each time step.  A setting of the environment $\vec{\theta}$ can be naturally combined with the underspecified environment $\PPOMDP$ to give a specific POMDP, which we will denote $\apply{\PPOMDP}{\vec{\theta}}$.

In UED, we want to generate a distribution of environments in which to continue training a policy.  We would like to curate this distribution of environments to best support the continued learning of a particular agent policy.  As such, we 
can propose a solution to UED by 
specifying some \emph{environment policy}, $\Lambda: \Pi \rightarrow \Dist{\Theta^T}$ where $\Pi$ is the set of possible policies
and $\Theta^T$ is the set of possible sequences of environment parameters. 
In Table \ref{Lambda_examples}, we see a few examples of possible choices for environment policies, as well as how they correspond to previous literature.  
Each of these choices also have a corresponding decision rule used for making \emph{decisions under ignorance}~\citep{peterson2017introduction}.  Each decision rule can be understood as a method for choosing a policy given an underspecified environment.  This connection between UED and decisions under ignorance extends further than these few decision rules.  In the Appendix we will make this connection concrete and show that, under reasonable assumptions, UED and decisions under ignorance are solving the same problem.

\begin{table}
  \caption{Example Environment Policies}
  
  \centering
  \begin{tabular}{lll}
    \toprule 
    UED Technique & Environment Policy    & Decision Rule    \\
    \midrule
    Domain Randomization \cite{jakobi1997evolutionary, sadeghi2016cad2rl,tobin2017domain} &  $\Lambda^{DR}(\pi) = \mathcal{U}(\Theta^T)$ & Insufficient Reason    \\
    Minimax Adversary \cite{pinto2017supervision,vinitsky2020dmalt,morimoto2001robust}& $\Lambda^{M}(\pi) = \argmin\limits_{\vec{\theta} \in \Theta^T}\{\Uf{\vec{\theta}}(\pi)\}$     & Maximin \\
    PAIRED (ours) &
    $\Lambda^{MR}(\pi) = \{\overline{\theta}_{\pi}: \frac{c_\pi}{v_\pi}, \tilde{D}_{\pi}:\text{otherwise} \} $ 
    & Minimax Regret    \\
    \bottomrule
  \end{tabular}
  \caption{The environment policies corresponding to the three techniques for UED which we study, along with their corresponding decision rule. Where $\mathcal{U}(X)$ is a uniform distribution over X, $\tilde{D}_{\pi}$ is a baseline distribution, $\overline{\theta}_{\pi}$ is the trajectory which maximizes regret of $\pi$, $v_\pi$ is the value above the baseline distribution that $\pi$ achieves on that trajectory, and $c_\pi$ is the negative of the worst-case regret of $\pi$, normalized so that $\frac{c_\pi}{v_\pi}$ is between $0$ and $1$.  This is described in full in the appendix.}
  \label{Lambda_examples}
\end{table}

PAIRED can be seen as an approach for approximating the environment policy, $\Lambda^{MR}$, which corresponds to the decision rule minimax regret.  One useful property of minimax regret, which is not true of domain randomization or minimax adversarial training, is that whenever a task has a sufficiently well-defined notion of success and failure it chooses policies which succeed.  By minimizing the worst case difference between what is achievable and what it achieves, whenever there is a policy which ensures success, it will not fail where others could succeed.

\begin{theorem}
\label{success:theorem}
 Suppose that all achievable rewards fall into one of two class of outcomes labeled \textbf{SUCCESS} giving rewards in $[\textbf{S}_{min}, \textbf{S}_{max}]$ and \textbf{FAILURE} giving rewards in $[\textbf{F}_{min}, \textbf{F}_{max}]$, such that $\textbf{F}_{min} \leq \textbf{F}_{max} < \textbf{S}_{min}\leq \textbf{S}_{max}$.  In addition assume that the range of possible rewards in either class is smaller than the difference between the classes so we have $\textbf{S}_{max} - \textbf{S}_{min} < \textbf{S}_{min} - \textbf{F}_{max}$ and $ \textbf{F}_{max} - \textbf{F}_{min} < \textbf{S}_{min} - \textbf{F}_{max}$. Further suppose that there is a policy $\pi$ which succeeds on any $\vec{\theta}$ whenever success is possible. Then minimax regret will choose a policy which has that property.
\end{theorem}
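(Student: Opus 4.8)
The plan is to compare the worst-case regret of the ``good'' policy $\pi$ guaranteed by the hypothesis against the worst-case regret of any ``bad'' policy that fails on some $\vec{\theta}$ where success is possible, and to show that the two range assumptions force a strict gap between these two quantities. Write the regret of a policy $\rho$ on parameters $\vec{\theta}$ as $\mathrm{Regret}(\rho, \vec{\theta}) = \max_{\rho'} \Uf{\vec{\theta}}(\rho') - \Uf{\vec{\theta}}(\rho)$, so that minimax regret selects a policy in $\argmin_{\rho} \max_{\vec{\theta}} \mathrm{Regret}(\rho, \vec{\theta})$. Since $\textbf{F}_{max} < \textbf{S}_{min}$, any policy scoring in the SUCCESS range outscores every policy confined to FAILURE, so the optimal policy on $\vec{\theta}$ succeeds precisely when success is possible; I would record this equivalence first, as it is used in both bounds.

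First I would upper bound the worst-case regret of $\pi$ by a case split on $\vec{\theta}$. If success is possible, then $\pi$ succeeds by hypothesis and the optimal policy also succeeds, so both values lie in $[\textbf{S}_{min}, \textbf{S}_{max}]$ and $\mathrm{Regret}(\pi,\vec{\theta}) \leq \textbf{S}_{max} - \textbf{S}_{min}$. If success is impossible, both values lie in $[\textbf{F}_{min}, \textbf{F}_{max}]$ and $\mathrm{Regret}(\pi,\vec{\theta}) \leq \textbf{F}_{max} - \textbf{F}_{min}$. The two range assumptions say exactly that each of these bounds is strictly less than $\textbf{S}_{min} - \textbf{F}_{max}$, so $\max_{\vec{\theta}} \mathrm{Regret}(\pi, \vec{\theta}) < \textbf{S}_{min} - \textbf{F}_{max}$.

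Next I would lower bound the worst-case regret of any policy $\rho$ that fails on some $\vec{\theta}_0$ at which success is possible. There the optimal policy succeeds and so scores at least $\textbf{S}_{min}$, while $\rho$ fails and so scores at most $\textbf{F}_{max}$; hence $\mathrm{Regret}(\rho, \vec{\theta}_0) \geq \textbf{S}_{min} - \textbf{F}_{max}$ and therefore $\max_{\vec{\theta}} \mathrm{Regret}(\rho, \vec{\theta}) \geq \textbf{S}_{min} - \textbf{F}_{max}$. Combining: a minimax regret policy $\pi^{*}$ has worst-case regret no larger than that of the candidate $\pi$, which is strictly below $\textbf{S}_{min} - \textbf{F}_{max}$, whereas every policy that fails somewhere success is possible has worst-case regret at least $\textbf{S}_{min} - \textbf{F}_{max}$. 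Thus $\pi^{*}$ cannot be such a policy, i.e. it succeeds whenever success is possible, which is the claim.

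This is essentially a separation-of-scales argument, so I do not anticipate a deep obstacle; the step needing the most care is invoking the two range assumptions correctly so that \emph{both} halves of the upper bound on $\pi$'s regret sit strictly below the single threshold $\textbf{S}_{min} - \textbf{F}_{max}$ that every bad policy is driven above. A secondary subtlety is the remark that ``success is possible'' coincides with ``the optimal policy succeeds'' (used to pin the optimal value at $\geq \textbf{S}_{min}$ in the lower bound), together with a brief note that the strict gap makes the conclusion robust even if minimax regret returns only an approximate minimizer.
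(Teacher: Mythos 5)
Your proposal is correct and follows essentially the same argument as the paper: bound the worst-case regret of the always-succeeding policy strictly below $C=\textbf{S}_{min}-\textbf{F}_{max}$ using the two range assumptions, then show any policy failing on a solvable $\vec{\theta}$ incurs regret at least $C$, so no such policy can be a minimax-regret minimizer. Your write-up is in fact slightly more careful than the paper's, making explicit the case split behind the upper bound and the fact that the optimal policy on $\vec{\theta}$ succeeds exactly when success is possible.
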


The proof of this property, and examples showing how minimax and domain randomization fail to have this property, are in the Appendix. In the next section we will formally introduce PAIRED as a method for approximating the minimax regret environment policy, $\Lambda^{MR}$.

\section{Protagonist Antagonist Induced Regret Environment Design (PAIRED)}
\label{sec:paired} 
Here, we describe how to approximate minimax regret, and introduce our proposed algorithm, Protagonist Antagonist Induced Regret Environment Design (PAIRED).  Regret is defined as the difference between the payoff obtained for a decision, and the optimal payoff that could have been obtained in the same setting with a different decision. In order to approximate regret, we use the difference between the payoffs of two agents acting under the same environment conditions.  Assume we are given a fixed environment with parameters $\vec{\theta}$, a fixed policy for the protagonist agent, $\pi^P$, and we then train a second antagonist agent, $\pi^A$, to optimality in this environment. Then,
the difference between the reward obtained by the antagonist, $U^{\vec{\theta}}(\pi^A)$, and the protagonist, $U^{\vec{\theta}}(\pi^P)$, is the regret:
\begin{align}
    \normalfont\textsc{Regret}^{\vec{\theta}}\left(\pi^P, \pi^A\right) = U^{\vec{\theta}}\left(\pi^A\right) - U^{\vec{\theta}}\left(\pi^P\right)
    \label{eq:paired_regret}
\end{align}
In PAIRED, we introduce an environment adversary that learns to control the parameters of the environment, $\vec{\theta}$, to maximize the regret of the protagonist against the antagonist. 
For each training batch,
the adversary generates the parameters of the environment, $\vec{\theta} \sim \tilde{\Lambda}$, which both agents will play. The adversary and antagonist are trained to maximize the regret as computed in Eq. \ref{eq:paired_regret}, while the protagonist learns to minimize regret. This procedure is shown in Algorithm \ref{alg:paired}. The code for PAIRED and our experiments is available in open source at \url{https://github.com/google-research/google-research/tree/master/social_rl/}.
 We note that our approach is agnostic to the choice of RL technique used to optimize regret. 

To improve the learning signal for the adversary, once the adversary creates an environment, both the protagonist and antagonist generate several trajectories within that same environment. This allows for a more accurate approximation the minimax regret as the difference between the maximum reward of the antagonist and the average reward of the protagonist over all trajectories: $\textsc{Regret} \approx \max_{\tau^A} U^{\vec{\theta}}(\tau^A) - \mathbb{E}_{\tau^P}[U^{\vec{\theta}}(\tau^P)]$. We have found this reduces noise in the reward and more accurately rewards the adversary for building difficult but solvable environments.

\begin{algorithm}[H]
\SetAlgoLined
 Randomly initialize Protagonist $\pi^P$, Antagonist $\pi^A$, and Adversary $\tilde{\Lambda}$; \\
 \While{not converged}{
  Use adversary to generate environment parameters: $\vec{\theta} \sim \tilde{\Lambda}$. Use to create POMDP $\apply{\PPOMDP}{\vec{\theta}}$.\\
  Collect Protagonist trajectory $\tau^P$ in $\apply{\PPOMDP}{\vec{\theta}}$. Compute: $U^{\vec{\theta}}(\pi^P) = \sum_{i=0}^{T} r_t\gamma^t$ \\
  Collect Antagonist trajectory $\tau^A$ in $\apply{\PPOMDP}{\vec{\theta}}$. Compute: $U^{\vec{\theta}}(\pi^A) = \sum_{i=0}^{T} r_t\gamma^t$ \\
  Compute: $\normalfont\textsc{Regret}^{\vec{\theta}}(\pi^P, \pi^A) = U^{\vec{\theta}}(\pi^A) -U^{\vec{\theta}}(\pi^P) $\\
  Train Protagonist policy $\pi^P$ with RL update and reward $R(\tau^P)= -\normalfont\textsc{Regret}$ \\
  Train Antagonist policy $\pi^A$ with RL update and reward $R(\tau^A)= \normalfont\textsc{Regret}$ \\
  Train Adversary policy $\tilde{\Lambda}$ with RL update and reward $R(\tau^{\tilde{\Lambda}})= \normalfont\textsc{Regret}$
 }
 \caption{PAIRED.}
 \label{alg:paired}
\end{algorithm}

At each training step, the environment adversary can be seen as solving a UED problem for the current protagonist, generating a curated set of environments in which it can learn. While the adversary is motivated to generate tasks beyond the protagonist's abilities, the regret can actually incentivize creating the easiest task on which the protagonist fails but the antagonist succeeds. This is because if the reward function contains any bonus for solving the task more efficiently (e.g. in fewer timesteps), the antagonist will get more reward if the task is easier. Thus, the adversary gets the most regret for proposing the easiest task which is outside the protagonist's skill range. Thus, regret incentivizes the adversary to propose tasks within the agent's ``zone of proximal development'' \cite{chaiklin2003zone}. As the protagonist learns to solve the simple tasks, the adversary is forced to find harder tasks to achieve positive reward, increasing the complexity of the generated tasks and leading to automatic curriculum generation.  

Though multi-agent learning may not always converge \citep{mazumdar2020gradient}, we can show that, if each team in this game finds an optimal solution, the protagonist would be playing a minimax regret policy.
\begin{theorem}
    Let $(\pi^P,\pi^A,\vec{\theta})$ be in Nash equilibrium and the pair $(\pi^A,\vec{\theta})$ be jointly a best response to $\pi^P$.  Then $\pi^P \in \argmin\limits_{\pi^P \in \Pi^P}\{\argmax\limits_{\pi^A,\vec{\theta} \in \Pi^A \times \Theta^T}\{\normalfont\textsc{Regret}^{\vec{\theta}}(\pi^P, \pi^A)\}\}$.
\end{theorem}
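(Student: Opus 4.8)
The plan is to prove the claim directly by showing that the worst-case regret incurred by $\pi^P$ is no larger than that of any competing protagonist policy. For a protagonist policy $\pi$ write $\textsc{WorstRegret}(\pi) := \max_{\pi^A,\vec{\theta}}\textsc{Regret}^{\vec{\theta}}(\pi,\pi^A)$. Reading the right-hand side of the statement as the set of minimizers of this inner worst-case value (the displayed $\argmax$ denotes the attained maximum, whose minimizers the outer $\argmin$ selects), the goal reduces to establishing $\textsc{WorstRegret}(\pi^P) \leq \textsc{WorstRegret}(\pi^{P'})$ for every $\pi^{P'}\in\Pi^P$.

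First I would unpack the two hypotheses and attach each to its role. Since $\textsc{Regret}^{\vec{\theta}}(\pi,\pi^A) = U^{\vec{\theta}}(\pi^A) - U^{\vec{\theta}}(\pi)$ and, by the Nash condition, $\pi^P$ is a best response to the fixed pair $(\pi^A,\vec{\theta})$ — i.e. it minimizes regret against them — the only protagonist-dependent term $-U^{\vec{\theta}}(\pi^P)$ is minimized, so $\pi^P$ maximizes its own utility on $\vec{\theta}$: $U^{\vec{\theta}}(\pi^P) \geq U^{\vec{\theta}}(\pi^{P'})$ for all $\pi^{P'}$. Separately, because $(\pi^A,\vec{\theta})$ is \emph{jointly} a best response to $\pi^P$, the equilibrium regret already equals the worst case over all antagonist/environment choices, giving $\textsc{Regret}^{\vec{\theta}}(\pi^P,\pi^A) = \textsc{WorstRegret}(\pi^P)$.

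The final step is a three-term inequality chain. Fix any $\pi^{P'}\in\Pi^P$. Starting from the equilibrium value and applying the utility-maximality $U^{\vec{\theta}}(\pi^P)\geq U^{\vec{\theta}}(\pi^{P'})$:
\begin{align*}
\textsc{WorstRegret}(\pi^P) &= U^{\vec{\theta}}(\pi^A) - U^{\vec{\theta}}(\pi^P) \\
&\leq U^{\vec{\theta}}(\pi^A) - U^{\vec{\theta}}(\pi^{P'}) = \textsc{Regret}^{\vec{\theta}}(\pi^{P'},\pi^A) \leq \textsc{WorstRegret}(\pi^{P'}),
\end{align*}
where the last inequality holds because $\textsc{WorstRegret}(\pi^{P'})$ maximizes over all $(\pi^A,\vec{\theta})$ and so in particular dominates the value at the specific equilibrium pair. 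This yields the desired minimality, hence membership of $\pi^P$ in the minimax-regret set.

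The argument is short, and the difficulty is interpretive rather than technical. The one place where genuine care is needed is keeping the two equilibrium conditions in their correct roles: protagonist best-response supplies the middle inequality through utility maximization, while the \emph{joint} best-response of $(\pi^A,\vec{\theta})$ is exactly what identifies the equilibrium payoff with $\textsc{WorstRegret}(\pi^P)$ rather than merely a lower bound on it. I do not anticipate a hard computational step; the proof is essentially a saddle-point unpacking, and the main risk is misreading the nested $\argmin$/$\argmax$ notation or conflating the two hypotheses.
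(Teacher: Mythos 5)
Your proof is correct, and it rests on the same underlying idea as the paper's: view the antagonist--adversary pair as a single maximizing player in a zero-sum game over $\textsc{Regret}$, and use the two equilibrium conditions as the two halves of a saddle point. The difference is in how that idea is cashed out. The paper, after forming the combined player $\pi^{A+\vec{\theta}}$, simply asserts that ``the minimax theorem applies'' and reads off the conclusion; you instead verify the saddle-point-implies-minimax-optimality fact directly, via the three-term chain $\textsc{WorstRegret}(\pi^P) = U^{\vec{\theta}}(\pi^A)-U^{\vec{\theta}}(\pi^P) \leq U^{\vec{\theta}}(\pi^A)-U^{\vec{\theta}}(\pi^{P'}) \leq \textsc{WorstRegret}(\pi^{P'})$. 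Your version is the more careful one: what is actually needed here is the elementary fact that an equilibrium strategy of a zero-sum game is minimax-optimal (which holds for any saddle point, with no hypotheses on the strategy spaces), not the minimax theorem proper, which concerns the equality of $\min\max$ and $\max\min$ and requires compactness/convexity assumptions the paper never states. You also correctly isolate which hypothesis does which job --- the protagonist's best response yields utility-maximality on the equilibrium $\vec{\theta}$, and the \emph{joint} best response of $(\pi^A,\vec{\theta})$ is what identifies the equilibrium regret with the attained worst case --- a separation the paper leaves implicit. The paper's phrasing is shorter and signals the game-theoretic framing more explicitly, but your self-contained derivation is the one a careful reader would want to see.
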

\begin{proof}
    Let $(\pi^P,\pi^A,\vec{\theta})$ be in Nash equilibria and the pair $(\pi^A,\vec{\theta})$ is jointly a best response to $\pi^P$.  Then we can consider $(\pi^A,\vec{\theta})$ as one player with policy which we will write as $\pi^{A+\vec{\theta}} \in \Pi \times \Theta^T$.  Then $\pi^{A+\vec{\theta}}$ is in a zero sum game with $\pi^P$, and the condition that the pair $(\pi^A,\vec{\theta})$ is jointly a best response to $\pi^P$ is equivalent to saying that $\pi^{A+\vec{\theta}}$ is a best response to $\pi^P$.  Thus $\pi^P$ and $\pi^{A+\vec{\theta}}$ form a Nash equilibria of a zero sum game, and the minimax theorem applies.  Since the reward in this game is defined by $\normalfont\textsc{Regret}$ we have:
    \[\pi^P \in \argmin\limits_{\pi^P \in \Pi^P}\{\argmax\limits_{\pi^{A+\vec{\theta}} \in \Pi \times \Theta^T}\{\normalfont\textsc{Regret}^{\vec{\theta}}(\pi^P, \pi^A)\}\}\]
    By the definition of $\pi^{A+\vec{\theta}}$ this proves the protagonist learns the minimax regret policy.
\end{proof}

The proof gives us reason to believe that the iterative PAIRED
training process in Algorithm \ref{alg:paired} can produce minimax regret policies. In Appendix \ref{Nash_PAIRED} we show that if the agents are in Nash equilibrium, without the coordination assumption, the protagonist will perform at least as well as the antagonist in every parameterization, and Appendix \ref{sec:app_combined} provides empirical results for alternative methods for approximating regret which break the coordination assumption.  In the following sections, we will show that empirically, policies trained with Algorithm \ref{alg:paired} exhibit good performance and transfer, both in generating emergent complexity and in training robust policies.

\section{Experiments}
\label{sec:results}

The experiments in this section focus on assessing whether training with PAIRED can increase the complexity of agents' learned behavior, and whether PAIRED agents can achieve better or more robust performance when transferred to novel environments. To study these questions, we first focus on the navigation tasks shown in Figure \ref{fig:example_grids}. Section \ref{sec:cont_control} presents results in continuous domains.

\subsection{Partially Observable Navigation Tasks}
Here we investigate navigation tasks (based on \citep{gym_minigrid}), in which an agent must explore to find a goal (green square in Figure \ref{fig:example_grids}) while navigating around obstacles. The environments are partially observable; the agent's field of view is shown as a blue shaded area in the figure. To deal with the partial observability, we parameterize the protagonist and antagonist's policies using recurrent neural networks (RNNs). All agents are trained with PPO \citep{schulman2017proximal}. Further details about network architecture and hyperparameters are given in Appendix \ref{sec:app_hparams}.

We train adversaries that learn to build these environments by choosing the location of the obstacles, the goal, and the starting location of the agent. The adversary's observations consist of a fully observed view of the environment state, the current timestep $t$, and a random vector $z \sim \mathcal{N}(0,I), z \in \mathbb{R}^D$ sampled for each episode. At each timestep, the adversary outputs the location where the next object will be placed; at timestep 0 it places the agent, 1 the goal, and every step afterwards an obstacle. Videos of environments being constructed and transfer performance are available at \url{https://www.youtube.com/channel/UCI6dkF8eNrCz6XiBJlV9fmw/videos}.

\subsubsection{Comparison to Prior Methods}
To compare to prior work that uses pure minimax training~\citep{pinto2017supervision,vinitsky2020dmalt,morimoto2001robust} (rather than minimax \emph{regret}), we use the same parameterization of the environment adversary and protagonist, but simply remove the antagonist agent. The adversary's reward is $R(\Lambda) = - \mathbb{E}_{\tau^P}[U(\tau^P)]$. While a direct comparison to POET~\cite{wang2019paired} is challenging since many elements of the POET algorithm are specific to a 2D walker, the main algorithmic distinction between POET and minimax environment design is maintaining a population of environment adversaries and agents that are periodically swapped with each other. Therefore, we also employ a Population Based Training (PBT) minimax technique in which the agent and adversary are sampled from respective populations for each episode. This baseline is our closest approximation of POET~\cite{wang2019paired}. To apply domain randomization, we simply sample the $(x,y)$ positions of the  agent, goal, and blocks uniformly at random. We sweep shared hyperparameters for all methods equally. Parameters for the emergent complexity task are selected to maximize the solved path length, and parameters for the transfer task are selected using a set of validation environments. Details are given in the appendix.

\captionsetup[sub]{font=small}
\begin{figure}
\centering
\begin{subfigure}{.25\textwidth}
  \centering
  \includegraphics[width=\linewidth]{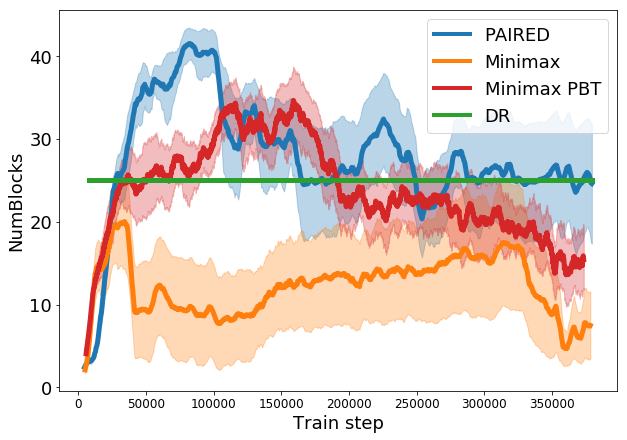}
  \caption{Number of blocks}
\end{subfigure}%
\begin{subfigure}{.25\textwidth}
  \centering
  \includegraphics[width=\linewidth]{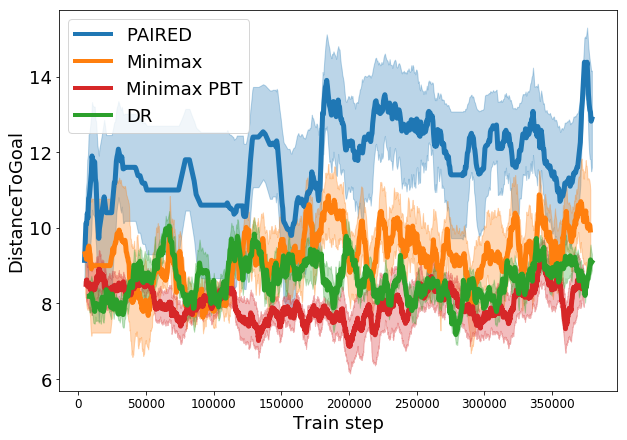}
  \caption{Distance to goal}
\end{subfigure}%
\begin{subfigure}{.25\textwidth}
  \centering
  \includegraphics[width=\linewidth]{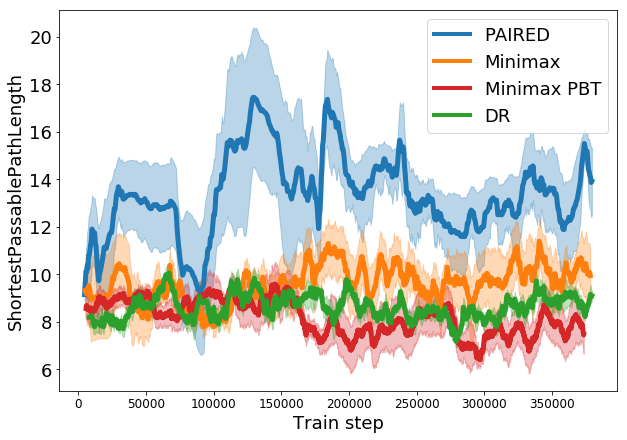}
  \caption{Passable path length}
\end{subfigure}%
\begin{subfigure}{.25\textwidth}
  \centering
  \includegraphics[width=\linewidth]{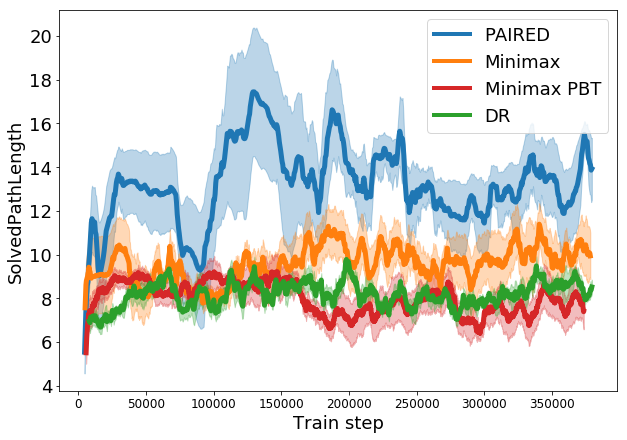}
  \caption{Solved path length}
  \label{fig:ex_fourrooms}
\end{subfigure}
\caption{Statistics of generated environments in terms of the number of blocks (a), distance from the start position to the goal (b), and the shortest path length between the start and the goal, which is zero if there is no possible path (c). The final plot shows agent learning in terms of the shortest path length of a maze successfully solved by the agents. Each plot is measured over five random seeds; error bars are a 95\% CI.
Domain randomization (DR) cannot tailor environment design to the agent's progress, so metrics remain fixed or vary randomly. Minimax training (even with populations of adversaries and agents) has no incentive to improve agent performance, so the length of mazes that agents are able to solve remains similar to DR (d). In contrast, PAIRED is the only method that continues to increase the passable path length to create more challenging mazes (c), producing agents that solve more complex mazes than the other techniques (d).}
\label{fig:complexity}
\end{figure}

\subsubsection{Emergent Complexity}
\label{sec:emergent_complexity}

Prior work \cite{wang2019paired,wang2020enhanced} focused on demonstrating emergent complexity as the primary goal, arguing that automatically learning complex behaviors is key to improving the sophistication of AI agents. Here, we track the complexity of the generated environments and learned behaviors throughout training.
Figure \ref{fig:complexity} shows the number of blocks (a), distance to the goal (b), and the length of the shortest path to the goal (c) in the generated environments. The solved path length (d) tracks the shortest path length of a maze that the agent has completed successfully, and can be considered a measure of the complexity of the agent's learned behavior.

Domain randomization (DR)
simply maintains environment parameters within a fixed range, and cannot continue to propose increasingly difficult tasks.
Techniques based on minimax training are purely motivated to decrease the protagonist's score, and as such do not enable the agent to continue learning; both minimax and PBT obtain similar performance to DR.
In contrast, PAIRED creates a curriculum of environments that begin with shorter paths and fewer blocks, but gradually increase in complexity based on both agents' current level of performance.  As shown in Figure \ref{fig:complexity} (d), this allows PAIRED protagonists to learn to solve more complex environments than the other two techniques.

\subsubsection{Zero-Shot Transfer}

\captionsetup[sub]{font=small}
\begin{figure}
\centering
\begin{subfigure}{.16\textwidth}
  \centering
  \includegraphics[width=\linewidth]{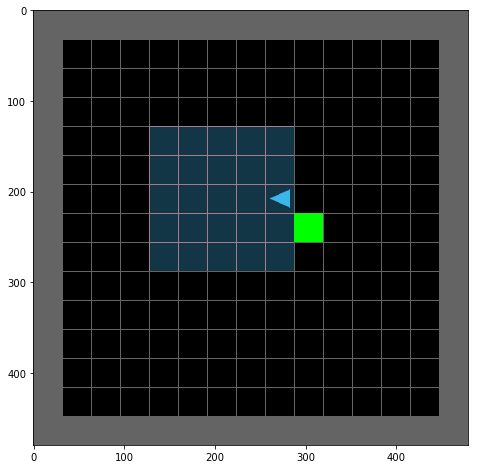}
\end{subfigure}%
\begin{subfigure}{.16\textwidth}
  \centering
  \includegraphics[width=\linewidth]{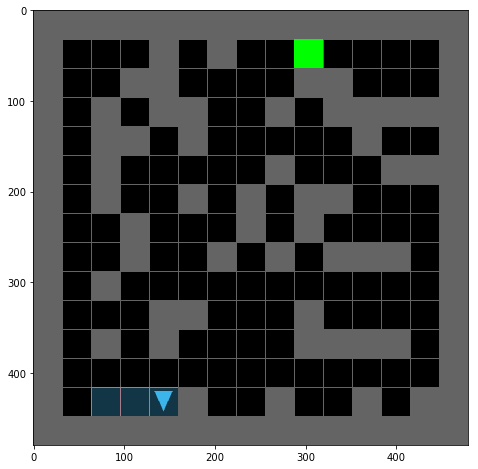}
\end{subfigure}%
\begin{subfigure}{.16\textwidth}
  \centering
  \includegraphics[width=\linewidth]{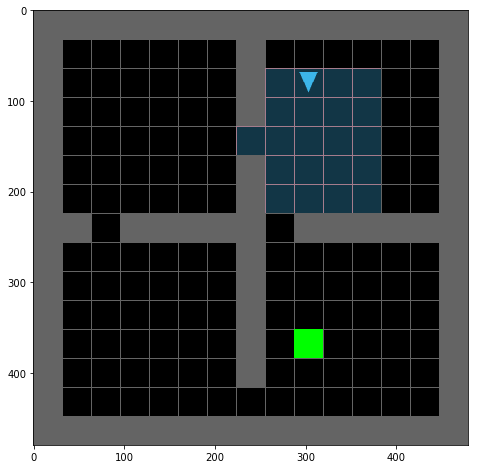}
\end{subfigure}%
\begin{subfigure}{.16\textwidth}
  \centering
  \includegraphics[width=\linewidth]{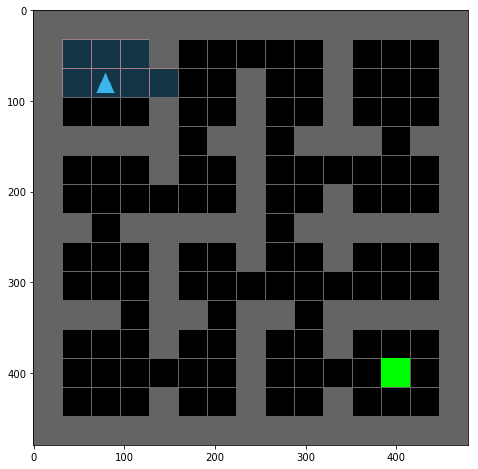}
\end{subfigure}%
\begin{subfigure}{.16\textwidth}
  \centering
  \includegraphics[width=\linewidth]{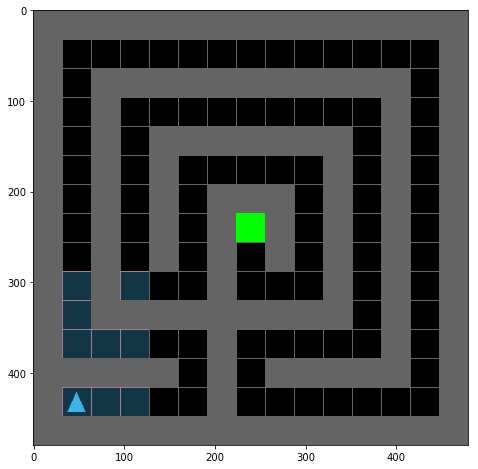}
\end{subfigure}%
\begin{subfigure}{.16\textwidth}
  \centering
  \includegraphics[width=\linewidth]{figures/transfer_envs/maze.png}
\end{subfigure}

\centering
\begin{subfigure}{.16\textwidth}
  \centering
  \includegraphics[width=\linewidth]{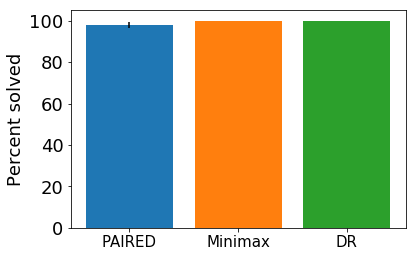}
  \caption{Empty}
\end{subfigure}%
\begin{subfigure}{.16\textwidth}
  \centering
  \includegraphics[width=\linewidth]{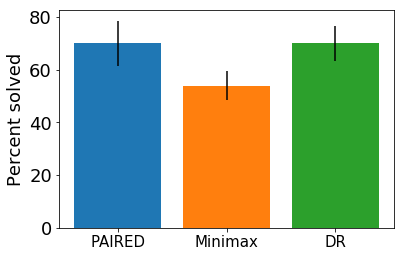}
  \caption{50 Blocks}
\end{subfigure}%
\begin{subfigure}{.16\textwidth}
  \centering
  \includegraphics[width=\linewidth]{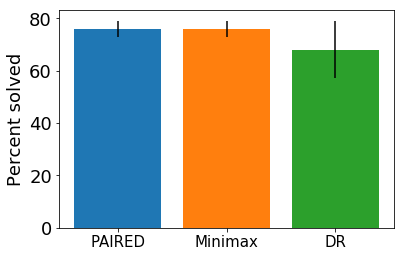}
  \caption{4 Rooms}
\end{subfigure}%
\begin{subfigure}{.16\textwidth}
  \centering
  \includegraphics[width=\linewidth]{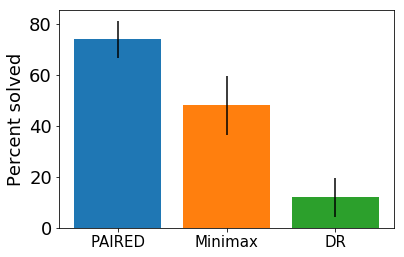}
  \caption{16 Rooms}
\end{subfigure}%
\begin{subfigure}{.16\textwidth}
  \centering
  \includegraphics[width=\linewidth]{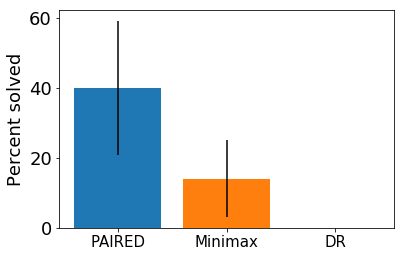}
  \caption{Labyrinth}
\end{subfigure}%
\begin{subfigure}{.16\textwidth}
  \centering
  \includegraphics[width=\linewidth]{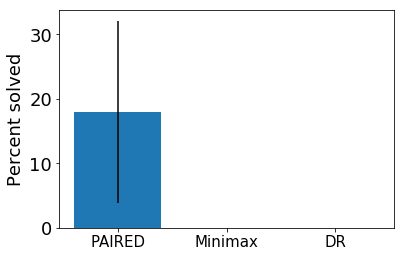}
  \caption{Maze}
\end{subfigure}
\caption{
Percent successful trials in environments used to test zero-shot transfer, out of 10 trials each for 5 random seeds. The first two (a, b) simply test out-of-distribution generalization to a setting of the number of blocks parameter. Four Rooms (c) tests within-distribution generalization to a specific configuration that is unlikely to be generated through random sampling. The 16 Rooms (d), Labyrinth environment (e) and Maze (f) environments were designed by a human to be challenging navigation tasks.
The bar charts show the zero-shot transfer performance of models trained with domain randomization (DR), minimax, or PAIRED in each of the environments. Error bars show a 95\% confidence interval. As task difficulty increases, only PAIRED retains its ability to generalize to the transfer tasks.}
\label{fig:transfer}
\end{figure}

In order for RL algorithms to be useful in the real world, they will need to generalize to novel environment conditions that their developer was not able to foresee. 
Here, we test the zero-shot transfer performance on a series of novel navigation tasks shown in Figure \ref{fig:transfer}. The first two transfer tasks simply use a parameter setting for the number of blocks that is out of the distribution (OOD) of the training environments experienced by the agents (analogous to the evaluation of \citep{pinto2017supervision}). We expect these transfer scenarios to be easy for all methods. We also include a more structured but still very simple Four Rooms environment, where the number of blocks is within distribution, but in an unlikely (though simple) configuration. As seen in the figure, this can usually be completed with nearly straight-line paths to the goal. To evaluate difficult transfer settings, we include the 16 rooms, Labyrinth, and Maze environments, which require traversing a much longer and more challenging path to the goal. This presents a much more challenging transfer scenario, since the agent must learn meaningful navigation skills to succeed in these tasks.

Figure \ref{fig:transfer} shows zero-shot transfer performance. As expected, all methods transfer well to the first two environments, although the minimax adversary is significantly worse in 50 Blocks. Varying the number of blocks may be a relatively easy transfer task, since partial observability has been shown to improve generalization performance in grid-worlds \cite{ye2020rotation}. As the environments increase in difficulty, so does the performance gap between PAIRED and the prior methods. In 16 Rooms, Labyrinth, and Maze, PAIRED shows a large advantage over the other two methods. In the Labyrinth (Maze) environment, PAIRED agents are able to solve the task in 40\% (18\%) of trials. In contrast, minimax agents solve 10\% (0.0\%) of trials, and DR agents solve 0.0\%. The performance of PAIRED on these tasks can be explained by the complexity of the generated environments; as shown in Figure \ref{fig:example_grids}c, the field of view of the agent (shaded blue) looks similar to what it might encounter in a maze, although the idea of a maze was never explicitly specified to the agent, and this configuration blocks is very unlikely to be generated randomly. These results suggest that PAIRED training may be better able to prepare agents for challenging, unknown test settings.

\subsection{Continuous Control Tasks}
\label{sec:cont_control}
\captionsetup[sub]{font=small}
\begin{figure}
\centering
\begin{subfigure}{.385\textwidth}
  \centering
  \includegraphics[width=\linewidth]{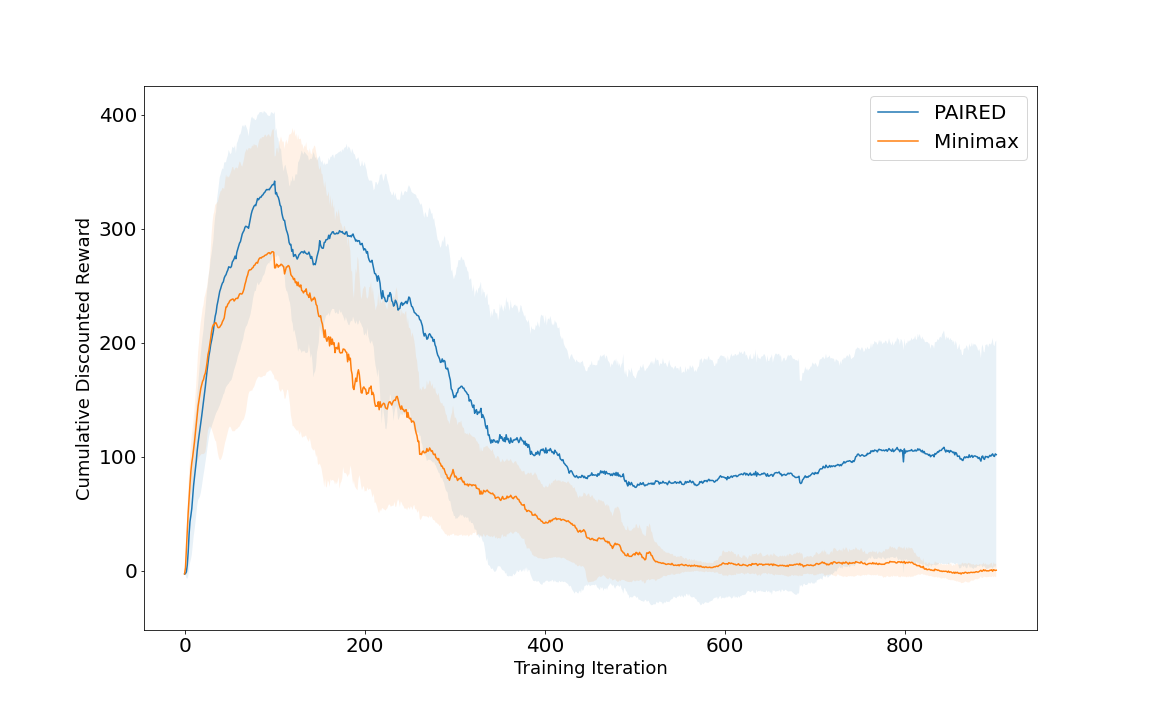}
  \caption{Reward}
  \label{fig:mujoco_rewards}
\end{subfigure}%
\begin{subfigure}{.3\textwidth}
  \centering
  \includegraphics[width=\linewidth]{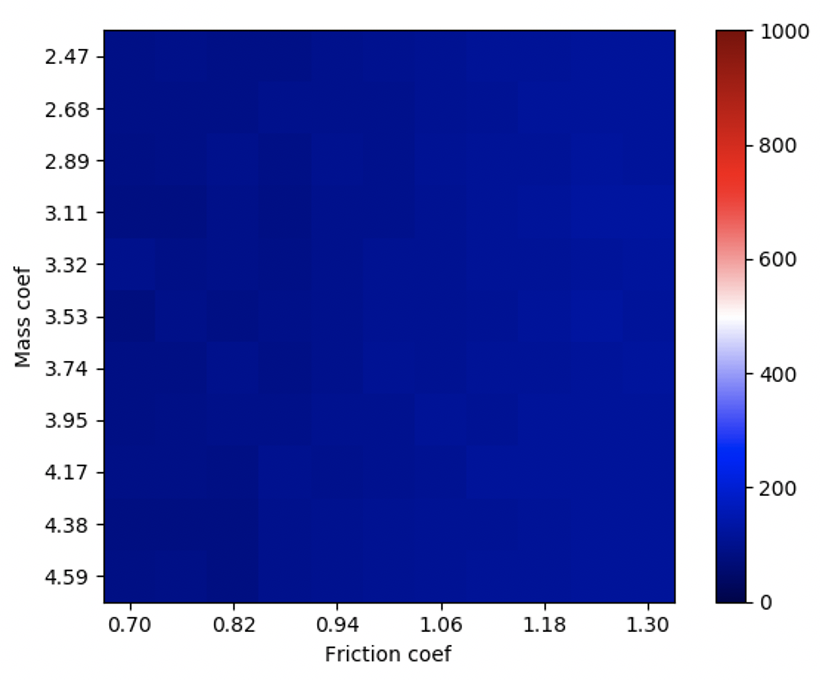}
  \caption{Minimax Adversarial}
\end{subfigure}%
\begin{subfigure}{.3\textwidth}
  \centering
  \includegraphics[width=\linewidth]{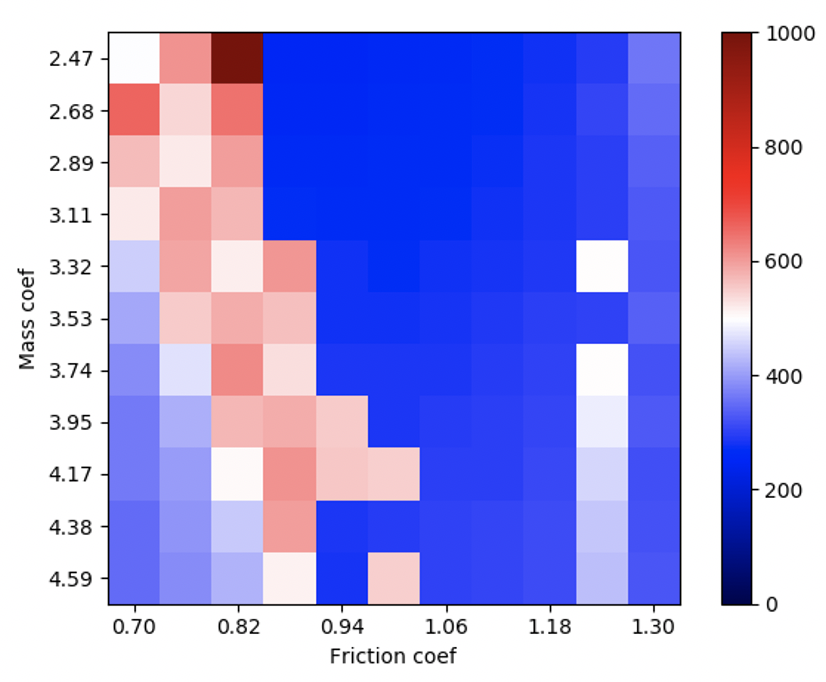}
  \caption{PAIRED}
\end{subfigure}%
\caption{Training curves for PAIRED and minimax adversaries on the MuJoCo hopper, where adversary strength is scaled up over the first 300 iterations (a). While both adversaries reduce the agent's reward by making the task more difficult, the minimax adversary is incentivized to drive the agent's reward as low as possible. In contrast, the PAIRED adversary must maintain feasible parameters. When varying force and mass are applied at test time, the minimax policy performs poorly in cumulative reward (b), while PAIRED is more robust (c).}
\label{fig:mujoco}
\vspace{-0.1cm}
\end{figure}

To compare more closely with prior work on minimax adversarial RL~\citep{pinto2017supervision,vinitsky2020dmalt}, we construct an additional experiment in a modified version of the MuJoCo hopper domain~\citep{todorov2012mujoco}. Here, the adversary outputs additional torques to be applied to each joint at each time step, $\vec{\theta}$. We benchmark PAIRED against unconstrained minimax training. To make minimax training feasible, the torque the adversary can apply is limited to be a proportion of the agent's strength, and is scaled from 0.1 to 1.0 over the course of 300 iterations. After this point, we continue training with no additional mechanisms for constraining the adversary. We pre-train the agents for 100 iterations, then test transfer performance to a set of mass and friction coefficients.  All agents are trained with PPO \citep{schulman2017proximal} and feed-forward policies.

Figure \ref{fig:mujoco_rewards} shows the rewards throughout training. We observe that after the constraints on the adversaries are removed after 300 iterations, the full-strength minimax adversary has the ability to make the task unsolvable, so the agent's reward is driven to zero for the rest of training. In contrast, PAIRED is able to automatically adjust the difficulty of the adversary to ensure the task remains solvable. As expected, Figure \ref{fig:mujoco} shows that when constraints on the adversary are not carefully tuned, policies trained with minimax fail to learn and generalize to unseen environment parameters. In contrast, PAIRED agents are more robust to unseen environment parameters, even without careful tuning of the forces that the adversary is able to apply.

\section{Conclusions}

We develop the framework of Unsupervised Environment Design (UED), 
and show its relevance to a range of RL tasks, from learning increasingly complex behavior or more robust policies, to improving generalization to novel environments.  In environments like games, for which the designer has an accurate model of the test environment, or can easily enumerate all of the important cases, using UED may be unnecessary.  However, UED could provide an approach to building AI systems in many of the ambitious uses of AI in real-world settings which are difficult to accurately model.

We have shown that tools from the decision theory literature can be used to characterize existing approaches to environment design, and motivate a novel approach based on minimax regret. Our algorithm, which we call Protagonist Antagonist Induced Regret Environment Design (PAIRED), 
avoids common failure modes of existing UED approaches, and is able to generate a curriculum of increasingly complex environments. Our results demonstrate that PAIRED agents learn more complex behaviors, and achieve higher zero-shot transfer performance in challenging, novel environments.

\section*{Broader Impact}
Unsupervised environment design is a technique with a wide range of applications, including unsupervised RL, transfer learning, and Robust RL.  Each of these applications has the chance of dramatically improving the viability of real-world AI systems.  Real-world AI systems could have a variety of positive impacts, reducing the possibility for costly human error, increasing efficiency, and doing tasks that are dangerous or difficult for humans.  However, there are a number of possible negative impacts: increasing unemployment by automating jobs \citep{frey2017future} 
and improving the capabilities of automated weapons.  These positives and negatives are potentially exacerbated by the emergent complexity we described in Section \ref{sec:emergent_complexity}, which could lead to improved efficiency and generality allowing the robotics systems to be applied more broadly.

However, if we are to receive any of the benefits of AI powered systems being deployed into real world settings, it is critical that we know how to make these systems robust and that they know how to make good decisions in uncertain environments.  In Section \ref{formalisms} we discussed the deep connections between UED and \emph{decisions under ignorance}, which shows that reasonable techniques for making decisions in uncertain settings correspond to techniques for UED, showing that the study of UED techniques can be thought of as another angle of attack at the problem of understanding how to make robust systems.  Moreover, we showed in Section \ref{sec:cont_control} that these connections are not just theoretical, but can lead to more effective ways of building robust systems.  Continued work in this area can help ensure that the predominant impact of our AI systems is the impact we intended.

Moreover, many of the risks of AI systems come from the system acting unexpectedly in a situation the designer had not considered.  Approaches for Unsupervised Environment Design could work towards a solution to this problem, by automatically generating interesting and challenging environments, hopefully detecting troublesome cases before they appear in deployment settings.

\begin{ack}
We would like to thank Michael Chang, Marvin Zhang, Dale Schuurmans, Aleksandra Faust, Chase Kew, Jie Tan, Dennis Lee, Kelvin Xu, Abhishek Gupta, Adam Gleave, Rohin Shah, Daniel Filan, Lawrence Chan, Sam Toyer, Tyler Westenbroek, Igor Mordatch, Shane Gu, DJ Strouse, and Max Kleiman-Weiner for discussions that contributed to this work. We are grateful for funding of this work as a gift from the Berkeley Existential Risk Intuitive.
We are also grateful to Google Research for funding computation expenses associated with this work.  
\end{ack}

\bibliographystyle{plainnat}
\bibliography{references}

\begin{appendix}

\section{Generality of UED}
\label{sec:theory}
\label{sec:universality}

In Section \ref{formalisms} we defined UPODMPS and UED and showed a selection of natural approaches to UED and corrisponding decision rules.  However, 
the connection between UED and decisions under ignorance is much broader than these few decision rules.  In fact, they can be seen as nearly identical problems. 

To see the connection to decisions under ignorance, it is important to notice that any decision rule can be thought of as an ordering over policies, ranking the policies that it chooses higher than other policies.  We will want to define a condition on this ordering, to do this we will first define what it means for one policy to \emph{totally dominate} another.

\begin{defn}
A policy, $\pi_A$, is \emph{totally dominated} by some policy, $\pi_B$ if for every pair of parameterizations $\vec{\theta}_A, \vec{\theta}_B$ $\Uf{\apply{\PPOMDP}{\vec{\theta}_A}}(\pi_A) < \Uf{\apply{\PPOMDP}{\vec{\theta}_B}}(\pi_B)$.
\end{defn} 

Thus if $\pi_A$ totally dominates $\pi_B$, it is reasonable to assume that $\pi_A$ is better, since the best outcome we could hope for from policy $\pi_B$ is still worse than the worst outcome we fear from policy $\pi_A$.  Thus we would hope that our decision rule would not prefer $\pi_B$ to $\pi_A$.  If a decision rule respects this property we will say that it $\emph{respects total domination}$ or formally:

\begin{defn}
We will say that an ordering $\prec$ \emph{respects total domination} iff $\pi_A \prec \pi_B$ whenever $\pi_B$ totally dominates $\pi_A$.
\end{defn}

This is a very weak condition, but it is already enough to allow us to provide a characterization of all such orderings over policies in terms of policy-conditioned distributions over parameters $\vec{\theta}$, which we will notate $\Lambda$ as in Section \ref{formalisms}.  Specifically, any ordering which respects total domination can be written as maximizing the expected value with respect to a \emph{policy-conditioned value function}, thus every reasonable way of ranking policies can be described in the UED framework.  For example, this implies that there is an environment policy which represents the strategy of minimax regret. We explicitly construct one such environment policy in Appendix \ref{minimax-regret-Lambda}.

To make this formal, the \emph{policy-conditioned value function}, $V^{\apply{\PPOMDP}{\Lambda}}(\pi)$, is defined to be the expected value a policy will receive in the policy-conditioned distribution of environments $\apply{\PPOMDP}{\Lambda(\pi)}$, or formally:
\begin{align}
V^{\apply{\PPOMDP}{\Lambda}}(\pi) = \EO\limits_{\vec{\theta} \sim \Lambda(\pi)}[\Uf{\vec{\theta}}(\pi)]
\end{align}.

The policy-conditioned value function is like the normal value function, but computed over the distribution of environments defined by the UPOMDP and environment policy. This of course implies an ordering over policies, defined in the natural way.

\begin{defn}
    We will say that $\Lambda$ prefers $\pi_B$ to $\pi_A$ notated $\pi_A \prec^\Lambda \pi_B$ if $V^{\apply{\PPOMDP}{\Lambda}}(\pi_A) < V^{\apply{\PPOMDP}{\Lambda}}(\pi_B)$
\end{defn}

Finally, this allows us to state the main theorem of this Section.

 \begin{theorem}
 \label{universality}
 	 Given an order over deterministic policies in a finite UPOMDP, $\prec$, there exits an environment policy $\Lambda: \Pi \rightarrow \Dist{\Theta^T}$ such that $\prec$ is equivalent to $\prec^\Lambda$ iff it respects total domination and it ranks policies with an equal and a deterministic outcome as equal.
 \end{theorem}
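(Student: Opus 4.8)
The plan is to reduce the whole statement to a one–dimensional selection problem. For each deterministic policy $\pi$, write $m_\pi = \min_{\vec{\theta}} \Uf{\vec{\theta}}(\pi)$ and $M_\pi = \max_{\vec{\theta}} \Uf{\vec{\theta}}(\pi)$, which are attained because the UPOMDP is finite. Since $V^{\apply{\PPOMDP}{\Lambda}}(\pi) = \EO_{\vec{\theta}\sim\Lambda(\pi)}[\Uf{\vec{\theta}}(\pi)]$ is a convex combination of the finitely many numbers $\{\Uf{\vec{\theta}}(\pi)\}_{\vec{\theta}}$, the set of values it can take as $\Lambda(\pi)$ ranges over $\Dist{\Theta^T}$ is exactly the interval $I_\pi := [m_\pi, M_\pi]$, and any target in $I_\pi$ is realized by a two‑point distribution placing mass on an $\argmin$ and an $\argmax$ of $\Uf{\cdot}(\pi)$. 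Because $\Lambda$ is policy‑conditioned, the choices for different policies are independent, so the existence of an environment policy with $\prec\ =\ \prec^{\Lambda}$ is equivalent to the existence of an assignment $\pi \mapsto v_\pi \in I_\pi$ that is order‑preserving: $v_{\pi_A} < v_{\pi_B}$ iff $\pi_A \prec \pi_B$, with equality exactly on $\prec$‑ties. The bulk of the proof is then purely about selecting reals from intervals.

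For necessity (the ``only if''), I would argue directly from $V^{\apply{\PPOMDP}{\Lambda}}(\pi) \in I_\pi$. If $\pi_B$ totally dominates $\pi_A$ then $M_{\pi_A} < m_{\pi_B}$, so $V^{\apply{\PPOMDP}{\Lambda}}(\pi_A) \le M_{\pi_A} < m_{\pi_B} \le V^{\apply{\PPOMDP}{\Lambda}}(\pi_B)$ for every $\Lambda$, forcing $\pi_A \prec^{\Lambda} \pi_B$; hence any representable order respects total domination. If instead $\pi_A$ and $\pi_B$ each have a deterministic outcome equal to the same constant $c$, then $V^{\apply{\PPOMDP}{\Lambda}}(\pi_A) = c = V^{\apply{\PPOMDP}{\Lambda}}(\pi_B)$ regardless of $\Lambda$, so they must be ranked equal. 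Both necessary conditions drop out immediately.

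For sufficiency (the ``if''), I would use finiteness to collapse $\prec$ into a finite chain of $\prec$‑equivalence classes (tiers) $C_1 \prec C_2 \prec \cdots \prec C_k$ and build the values tier by tier. Respect for total domination controls the interval endpoints: within a tier, if $\pi,\pi' \in C_j$ had $m_\pi > M_{\pi'}$ then $\pi$ would totally dominate $\pi'$, forcing $\pi' \prec \pi$ and contradicting that they are tied; so $\bigcap_{\pi\in C_j} I_\pi = [L_j, U_j]$ with $L_j = \max_{\pi \in C_j} m_\pi \le \min_{\pi\in C_j} M_\pi = U_j$ is nonempty. The same argument applied across tiers gives $L_i \le U_j$ for all $i \le j$, which guarantees a \emph{weakly} increasing selection $v_j \in [L_j, U_j]$ (for instance the running maximum $v_j = \max_{i\le j} L_i$). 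Assigning every policy in $C_j$ the common value $v_j$ and realizing it by the two‑point distribution above would then produce the desired $\Lambda$.

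The hard part will be upgrading the weakly increasing selection to a \emph{strictly} increasing one, since distinct tiers must receive distinct values. Between consecutive tiers $C_j \prec C_{j+1}$, respect for total domination (with its strict ``$<$'' in the definition) only yields the non‑strict endpoint bound $m_\pi \le M_{\pi'}$, so the feasible ranges $[L_j,U_j]$ and $[L_{j+1},U_{j+1}]$ can abut at a shared endpoint; where they merely touch, the density of $\mathbb{R}$ lets me nudge $v_{j+1}$ upward to separate the tiers, but this must be done simultaneously for all tiers without violating any interval, and it genuinely fails when a block of strictly ordered tiers is pinned to a single common value. This separation step is exactly where the second hypothesis must be brought to bear: the pinning case is governed by whether an interval has collapsed to a point, i.e.\ by policies with deterministic outcomes, and the condition that equal deterministic outcomes are ranked equal is what rules out two strictly ordered tiers being forced to the same constant. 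The delicate portion of the write‑up is therefore to track precisely which endpoints are forced, invoke the deterministic‑outcome hypothesis at exactly the collapsed intervals, and only then fix an explicit strictly increasing $(v_j)_{j=1}^{k}$ together with its two‑point distributions; the rest is bookkeeping.
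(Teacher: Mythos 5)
Your overall strategy is the same as the paper's: note that $V^{\apply{\PPOMDP}{\Lambda}}(\pi)$ can be steered, independently for each policy, to any value in $[m_\pi, M_\pi]$, derive necessity of both conditions directly from this interval constraint, and for sufficiency assign real values to the $\prec$-classes in increasing order. Your necessity direction is correct and matches the paper, as does your observation that respecting total domination forces $L_i \leq U_j$ for tiers $C_i \preceq C_j$; the paper assigns values policy-by-policy rather than tier-by-tier, but that difference is cosmetic.

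The gap is in the step you yourself flag as delicate: upgrading the weakly increasing selection to a strictly increasing one. You claim the only obstruction is two strictly ordered tiers both pinned to the same constant, and that the equal-deterministic-outcome hypothesis excludes this. That is not the only obstruction. Consider $\Theta^T = \{\theta_1,\theta_2\}$ and two policies with $U^{\theta_1}(\pi_1)=U^{\theta_2}(\pi_1)=1$, $U^{\theta_1}(\pi_2)=0$, $U^{\theta_2}(\pi_2)=1$, and the order $\pi_1 \prec \pi_2$. Neither policy totally dominates the other (the definition requires strict inequality over all pairs of parameters), and only $\pi_1$ has a deterministic outcome, so both hypotheses hold vacuously; yet $V^{\apply{\PPOMDP}{\Lambda}}(\pi_1)=1$ for every $\Lambda$ while $V^{\apply{\PPOMDP}{\Lambda}}(\pi_2)\leq 1$, so no $\Lambda$ realizes $\pi_1 \prec^{\Lambda} \pi_2$. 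The general obstruction is any $\pi \prec \pi'$ with $m_\pi = M_{\pi'}$, in which only one of the two intervals need be degenerate, whereas your invocation of the hypothesis covers only the case where both are. (To be fair, this is exactly where the paper's own proof is loosest: it asserts that the average of the largest value assigned below $\pi_B$ and the smallest $\max_{\vec{\theta}} U^{\vec{\theta}}(\pi_C)$ above can serve as $V^{\apply{\PPOMDP}{\Lambda}}(\pi_B)$, without verifying that this average lies in $[m_{\pi_B},M_{\pi_B}]$ and is strictly below every such maximum.) To close the argument you would need either an additional hypothesis, e.g.\ that $\pi \prec \pi'$ implies $\min_{\vec{\theta}} U^{\vec{\theta}}(\pi) < \max_{\vec{\theta}} U^{\vec{\theta}}(\pi')$, or a sharper accounting of which abutting endpoints can actually occur; as written, the separation step does not go through.
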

 
\begin{proof} 
    Suppose you have some order of policies in a finite UPOMDP, $\prec$, which respects total domination and ranks policies equal if they have an equal and deterministic outcome.  Our goal is to construct a function $\Lambda$ such that $\pi_A \prec^\Lambda \pi_B$ iff $\pi_A \prec \pi_B$.  
    
    When constructing $\Lambda$ notice that we can chose $\Lambda(\pi)$ independently for each policy $\pi$ and that we can choose the resulting value for $V^{\apply{\PPOMDP}{\Lambda}}(\pi)$ to lie anywhere within the range $[\min\limits_{\vec{\theta} \in \Theta^T}\{U^{\vec{\theta}}(\pi)\}, \max\limits_{\vec{\theta} \in \Theta^T}\{U^{\vec{\theta}}(\pi)\}]$.  Since the number of deterministic policies in a finite POMDP is finite, we can build $\Lambda$ inductively by taking the lowest ranked policy $\pi$ in terms of $\prec$ for which $\Lambda$ has not yet been defined and choosing the value for $V^{\apply{\PPOMDP}{\Lambda}}(\pi)$ appropriately.  
    
    For the lowest ranked policy $\pi_B$ for which $\Lambda(\pi_B)$ has not yet been defined we set $\Lambda(\pi_B)$ such that $V^{\apply{\PPOMDP}{\Lambda}}(\pi_B)$  greater than $\Lambda(\pi_A)$ for all $\pi_A \prec \pi_B$ and is lower than the minimum possible value of any $\pi_C$ such that $\pi_B \prec \pi_C$, if such a setting is possible.  That is, we choose $\Lambda(\pi_B)$ to satisfy for all $\pi_A \prec \pi_B \prec \pi_C$:
    \begin{align}
    \label{Theorem6condition}
   \Lambda(\pi_A) < V^{\apply{\PPOMDP}{\Lambda}}(\pi_B) < \max\limits_{\vec{\theta} \in \Theta^T}\{U^{\vec{\theta}}(\pi_C)\}
    \end{align}
    
    Intuitively this ensures that $V^{\apply{\PPOMDP}{\Lambda}}(\pi_B)$ is high enough to be above all $\pi_A$ lower than $\pi_B$ and low enough such that all future $\pi_C$ can still be assigned an appropriate value.
    
    Finally, we will show that it is possible to set $\Lambda(\pi_B)$ to satisfy these conditions in Equation \ref{Theorem6condition}.  By our inductive hypothesis, we know that $\Lambda(\pi_A) < \max\limits_{\vec{\theta} \in \Theta^T}\{U^{\vec{\theta}}(\pi_B)\}$ and $\Lambda(\pi_A) < \max\limits_{\vec{\theta} \in \Theta^T}\{U^{\vec{\theta}}(\pi_C)\}$.  Since $\prec$ respects total domination, we know that $\min\limits_{\vec{\theta} \in \Theta^T}\{U^{\vec{\theta}}(\pi_B)\} \leq \max\limits_{\vec{\theta} \in \Theta^T}\{U^{\vec{\theta}}(\pi_C)\}$ for all $\pi_B \leq \pi_C$.  Since there are a finite number of $\pi_C$ we can set $V^{\apply{\PPOMDP}{\Lambda}}(\pi_B)$ to be the average of the smallest value for  $\max\limits_{\vec{\theta} \in \Theta^T}\{U^{\vec{\theta}}(\pi_C)\}$ and the largest value for $\Lambda(\pi_A)$ for any $\pi_C$ and $\pi_A$ satisfying $\pi_A \prec \pi_B \prec \pi_C$.
    
    The other direction, can be checked directly. If $\pi_A$ is totally dominated by $\pi_B$, then: \[V^{\apply{\PPOMDP}{\Lambda}}(\pi_A) = \EO\limits_{\vec{\theta} \sim \Lambda(\pi_A)}[\Uf{\vec{\theta}}(\pi_A)] < \EO\limits_{\vec{\theta} \sim \Lambda(\pi_B)}[\Uf{\vec{\theta}}(\pi_B)] = V^{\apply{\PPOMDP}{\Lambda}}(\pi_B)\]
    
    Thus if $\prec$ can be represented with an appropriate choice of $\Lambda$ then it respects total domination and it ranks policies with an equal and a deterministic outcome equal.
\end{proof}

Thus, the set of decision rules which respect total domination are exactly those which can be written by some concrete environment policy, and thus any approach to making decisions under uncertainty which has this property can be thought of as a problem of unsupervised environment design and vice-versa.  To the best of our knowledge there is no seriously considered approach to making decisions under uncertainty which does not satisfy this property.

\section{Defining an Environment Policy Corresponding to Minimax Regret}
\label{minimax-regret-Lambda}

In this section we will be deriving a function $\Lambda(\pi)$ which corresponds to regret minimization. We will be assuming a finite MDP with a bounded reward function.  Explicitly, we will be deriving a $\Lambda$ such that minimax regret is the decision rule which maximizes the corresponding \emph{policy-conditioned value function} defined in Appendix \ref{sec:universality}: 

\[
V^{\apply{\PPOMDP}{\Lambda}}(\pi) = \EO\limits_{\vec{\theta} \sim \Lambda(\pi)}[\Uf{\vec{\theta}}(\pi)]
\]

In this context, we will define regret to be:

\[\normalfont\textsc{Regret}(\pi, \vec{\theta}) = \max\limits_{\pi^B \in \Pi}\{\Uf{\apply{\PPOMDP}{\vec{\theta}}}(\pi^B) - \Uf{\apply{\PPOMDP}{\vec{\theta}}}(\pi) \} \]

Thus the set of $\normalfont\textsc{MinimaxRegret}$ strategies can be defined naturally as the ones which achive the minimum worst case regret:

\[
 \normalfont\textsc{MinimaxRegret} =\argmin\limits_{\pi \in \Pi}\{\max\limits_{\vec{\theta} \in \Theta^T}\{\normalfont\textsc{Regret}(\pi,\vec{\theta})\}\}
\]

We want to find a function $\Lambda$ for which the set of optimal policies which maximize value with respect to $\Lambda$ is the same as the set of $\normalfont\textsc{MinimaxRegret}$ strategies.  That is, we want to find $\Lambda$ such that:
\[
\normalfont\textsc{MinimaxRegret} = \argmax_{\pi \in \Pi}\{V^{\apply{\PPOMDP}{\Lambda}}(\pi)\}
\]

To do this it is useful to first introduce the concept of \emph{weak total domination}, which is the natural weakening of the concept of total domination introduced in Appendix \ref{sec:universality}.  A policy is said to be weakly totally dominated by a policy $\pi_B$ if the maximum outcome that can be achieved by $\pi_A$ is equal to the minimum outcome that can be achieved by $\pi_B$, or formally:  

\begin{defn}
A policy, $\pi_A$, is \emph{weakly totally dominated} by some policy, $\pi_B$ if for every pair of parameterizations $\vec{\theta}_A, \vec{\theta}_B$ $\Uf{\apply{\PPOMDP}{\vec{\theta}_A}}(\pi_A) \leq \Uf{\apply{\PPOMDP}{\vec{\theta}_B}}(\pi_B)$.
\end{defn} 

The concept of weak total domination is only needed here to clarify what happens in the special case that there are policies that are weakly totally dominated but not totally dominated.  These policies may or may not be minimax regret optimal policies, and thus have to be treated with more care.  To get a general sense for how the proof works you may assume that there are no policies which are weakly totally dominated but not totally dominated and skip the sections of the proof marked as a special case: the second paragraph of the proof for Lemma \ref{lem:baseline_distribution} and the second paragraph of the proof for Theorem \ref{thm:miniamx_regret_environment_policy}.

We can use this concept to help define a normalization function $D: \Pi \rightarrow \Dist{\Theta^T}$ which has the property that if there are two policies $\pi_A,\pi_B$ such that neither is totally dominated by the other, then they evaluate the same value on the distribution of environments.  We will use $D$ as a basis for creating $\Lambda$ by shifting probability mass towards or away from this distribution in the right proportion.  In general there are many such normalization functions, but we will simply show that one of these exists.

\begin{lemma}
\label{lem:baseline_distribution}
    There exists a function $D: \Pi \rightarrow \Dist{\Theta^T}$ such that $\Lambda(\pi)$ has support at least $s > 0$ on the highest-valued regret-maximizing parameterization $\overline{\theta}_{\pi}$ for all $\pi$,  that for all $\pi_A, \pi_B$ such that neither is weakly totally dominated by any other policy, $V^{\apply{\PPOMDP}{D}}(\pi_A) = V^{\apply{\PPOMDP}{D}}(\pi_B)$ and $V^{\apply{\PPOMDP}{D}}(\pi_A) > V^{\apply{\PPOMDP}{D}}(\pi_B)$ when $\pi_B$ is totally dominated and $\pi_A$ is not.  If the policy $\pi$ is weakly dominated but not totally dominated we choose $D$ to put full support on the highest-valued regret-maximizing parameterization, $D(\pi) = \overline{\theta}_{\pi}$.
\end{lemma}
\begin{proof}
    We will first define $D$ for the set of policies which are not totally dominated, which we will call $X$.  Note that by the definition of not being totally dominated, there is a constant $C$ which is between the worst-case and best-case values of all of the policies in $X$. 
    
    \textbf{Special Case:} If there is only one choice for $C$ and if that does have support over $\overline{\theta}_{\pi}$ for each $\pi$ which is not totally dominated, then we know that there is a weakly totally dominated policy which is not totally dominated. In this case we choose a $C$ which is between the best-case and the worst-case for the not weakly totally dominated policies. Thus for each $\pi$ which is not weakly dominated we can find a distribution $\vec{\theta}$ such that $U^{\vec{\theta}}(\pi)=C$.  If C is not the maximum or minimum achievable value for $\pi$ then we will chose $D(\pi)$ to have expected value $C$ and and have support over all $\Theta^T$. 
    
    Otherwise, we can choose $D(\pi)=\overline{\theta}_{\pi}$.  For other $\pi \notin X$, we can let $D(\pi) = U(\Theta^T)$, which achieves value less than $C$ since all outcomes for $\pi$ have utility less than $C$.  Thus, by construction $D$ satisfies the desired conditions.
\end{proof}

Given such a function $D$, we will construct a $\Lambda$ which works by shifting probability towards or away from the environment parameters that maximize the regret for the given agent, in the right proportions to achieve the desired result.  We claim that the following definition works:
\[
\Lambda(\pi) = \{\overline{\theta}_{\pi}: \frac{c_\pi}{v_\pi}, \tilde{D}_{\pi}:\text{otherwise} \}
\]
Where the bracket notation defines a mixture distribution $\{x:P, y:Q\}(\theta) = x P(\theta)+y Q(\theta)$ and where $\tilde{D}_{\pi}=D(\pi)$ is a baseline distribution satisfying the conditions of Lemma~\ref{lem:baseline_distribution}, $\overline{\theta}_{\pi}$ is the trajectory which maximizes regret of $\pi$, and $v_\pi$ is the value above the baseline distribution that $\pi$ achieves on that trajectory, and $c_\pi$ is the negative of the worst-case regret of $\pi$, normalized so that $\frac{c_\pi}{v_\pi}$ is between $-s$ and $1$.  If $\frac{c_\pi}{v_\pi}$ is negative then we will interpret probability mass of $\frac{c_\pi}{v_\pi}$ on $\overline{\theta}_{\pi}$ in $\tilde{D}_{\pi}$ being redistributed proportionally in $\tilde{D}_{\pi}$ as is implied by the equation.  Note that if $v_\pi=0$ then $\tilde{D}_{\pi} =\overline{\theta}_{\pi} $ by our construction, so $\frac{c_\pi}{v_\pi}$ can be interpreted to be anything in $[0,1]$ without effecting the distribution $\Lambda(\pi)$.

 \begin{theorem}
 	 For $\Lambda$ as defined above: \[\argmax_{\pi \in \Pi}\{\EO\limits_{\vec{\theta} \sim \Lambda(\pi)}[U^{\vec{\theta}}(\pi)] \} = \normalfont\textsc{MinimaxRegret}\]
 	 \label{thm:miniamx_regret_environment_policy}
 \end{theorem}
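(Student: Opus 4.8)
The plan is to evaluate $V^{\apply{\PPOMDP}{\Lambda}}(\pi)$ directly on the two-point mixture defining $\Lambda(\pi)$ and to expose the algebraic cancellation that turns it into a strictly decreasing function of the worst-case regret $R_\pi := \max_{\vec{\theta} \in \Theta^T}\textsc{Regret}(\pi,\vec{\theta})$. Writing the mixing weight as $\tfrac{c_\pi}{v_\pi}$ on $\overline{\theta}_\pi$ and $1-\tfrac{c_\pi}{v_\pi}$ on the baseline $\tilde{D}_\pi = D(\pi)$, linearity of expectation gives
\[ V^{\apply{\PPOMDP}{\Lambda}}(\pi) = \tfrac{c_\pi}{v_\pi}\,\Uf{\overline{\theta}_\pi}(\pi) + \bigl(1-\tfrac{c_\pi}{v_\pi}\bigr)\, V^{\apply{\PPOMDP}{D}}(\pi). \]
By the Lemma above, the baseline value $V^{\apply{\PPOMDP}{D}}(\pi)$ equals one common constant $C$ on every policy that is not totally dominated. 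Since $v_\pi = \Uf{\overline{\theta}_\pi}(\pi) - C$ is by definition the value $\pi$ earns on $\overline{\theta}_\pi$ measured from that baseline, the two terms collapse to
\[ V^{\apply{\PPOMDP}{\Lambda}}(\pi) = \tfrac{c_\pi}{v_\pi}\,v_\pi + C = c_\pi + C. \]
This cancellation is the entire purpose of dividing by $v_\pi$ in the mixing weight: it erases the policy-dependent scale $v_\pi$ and leaves the value equal to $c_\pi$ shifted by the universal constant $C$.

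It then suffices to observe that $c_\pi$ is a fixed, strictly decreasing affine image of the worst-case regret, $c_\pi = a - b\,R_\pi$ for constants $a$ and $b>0$ that do not depend on $\pi$ (this is precisely what it means for $c_\pi$ to be the \emph{normalized} negative of the worst-case regret). Consequently $V^{\apply{\PPOMDP}{\Lambda}}(\pi) = (a+C) - b\,R_\pi$ on all non-dominated policies, so the ordering these values induce is the exact reverse of the ordering by $R_\pi$, and hence
\[ \argmax_{\pi}\{V^{\apply{\PPOMDP}{\Lambda}}(\pi)\} = \argmin_{\pi}\{R_\pi\} = \textsc{MinimaxRegret}. \]
Totally dominated policies are disposed of separately: if $\pi_B$ totally dominates $\pi_A$ then $\Uf{\vec{\theta}}(\pi_B) > \Uf{\vec{\theta}}(\pi_A)$ pointwise in $\vec{\theta}$, so $M(\vec{\theta}) - \Uf{\vec{\theta}}(\pi_B) < M(\vec{\theta}) - \Uf{\vec{\theta}}(\pi_A)$ for every $\vec{\theta}$ (with $M(\vec{\theta}) = \max_{\pi^B}\Uf{\vec{\theta}}(\pi^B)$), whence $R_{\pi_B} < R_{\pi_A}$ and a dominated policy is never minimax-regret optimal; and the Lemma's inequality $V^{\apply{\PPOMDP}{D}}(\pi) \le C$ for such policies is used to check that they do not attain the value maximum either. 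Thus restricting attention to the non-dominated policies costs nothing.

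The step I expect to be the main obstacle is verifying that a single, policy-independent normalization $c_\pi = a - b\,R_\pi$ with $b>0$ can be chosen \emph{simultaneously} with the requirement that each $\tfrac{c_\pi}{v_\pi}$ be a genuine probability, i.e. $0 \le c_\pi \le v_\pi$ for every non-dominated $\pi$. These two demands pull against each other, and their joint satisfiability rests on the freedom the Lemma leaves in the baseline constant $C$ (only pinned to lie between the global worst- and best-case values). Unwinding the constraints using $R_\pi = M(\overline{\theta}_\pi) - \Uf{\overline{\theta}_\pi}(\pi)$, one needs $a \ge b\max_\pi R_\pi$ to force $c_\pi \ge 0$ and $a \le \min_\pi\{M(\overline{\theta}_\pi) - C - (1-b)R_\pi\}$ to force $c_\pi \le v_\pi$; taking the scale $b>0$ small and the baseline $C$ strictly below the policies' worst-case values (so that $v_\pi > 0$) opens a nonempty window for $a$ while keeping strict monotonicity in $R_\pi$. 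Finally the degenerate cases must be dispatched by convention: when $v_\pi = 0$ (a policy whose regret-maximizing parameter coincides with its worst case) or $R_\pi = 0$ (a uniformly optimal policy), the weight $\tfrac{c_\pi}{v_\pi}$ is read as placing all mass on the baseline, and one confirms that the resulting value still equals $a + C$ so that such policies land correctly in the $\argmax$.
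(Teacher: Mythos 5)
Your proof is correct and follows essentially the same route as the paper's: expand the two-point mixture, use the normalization lemma to replace the baseline term by a policy-independent constant, cancel $v_\pi$ to obtain $V^{\apply{\PPOMDP}{\Lambda}}(\pi) = c_\pi + C$, and conclude that maximizing this is equivalent to minimizing worst-case regret. Your additional care about the joint satisfiability of the normalization constraints ($0 \le c_\pi \le v_\pi$ simultaneously with $c_\pi$ being a policy-independent decreasing affine image of $R_\pi$), the treatment of totally dominated policies, and the degenerate case $v_\pi = 0$ goes beyond what the paper writes down, which simply asserts the normalization and performs the cancellation.
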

\begin{proof}
    By the construction of $D(\pi)$ by Lemma~\ref{lem:baseline_distribution}, we will let $C= \EO\limits_{\vec{\theta} \sim \tilde{D}_{\pi}}[U^{\vec{\theta}}(\pi)]$, which is a constant independent of $\pi$ by construction for all of the policies which are not weakly totally dominated.  If there are no policies which are weakly totally dominated but not totally dominated the next paragraph can be skipped, otherwise it handles the special case by showing that the conditions we ensured for the non-totally dominated policies hold for these weakly totally dominated policies if and only if they are minimax regret optimal policies.
    
    \textbf{Special Case:} For the $\pi$ which are weakly totally dominated, but not totally dominated, we will show that $ \EO\limits_{\vec{\theta} \sim \tilde{D}_{\pi}}[U^{\vec{\theta}}(\pi)]=C$ if $\pi\in\normalfont\textsc{MinimaxRegret}$ and $ \EO\limits_{\vec{\theta} \sim \tilde{D}_{\pi}}[U^{\vec{\theta}}(\pi)]<C$ if $\pi \notin \normalfont\textsc{MinimaxRegret}$.  Let $M$ be the maximum value achieved by any policy. Then if $\pi\in\normalfont\textsc{MinimaxRegret}$ and the maximum value it achieves is $C'$ then note that any policy that weakly dominates $\pi$, say $\pi'$, can achieve no more than $M-C'$ regret. Thus if $\pi$ is a minimax optimal strategy, it must achieve exactly $M-C'$ regret and both $\pi$ and $\pi'$ must have a regret maximizing outcome at the same value $M-C'$, which is the maximum value outcome for $\pi$ and the minimum value outcome for $\pi'$.  Thus $C=C'$ since both $\pi$ and $\pi'$ must have support on the an outcome of value $C'$ and $\EO\limits_{\vec{\theta} \sim \tilde{D}_{\pi}}[U^{\vec{\theta}}(\pi)]=C$.  Otherwise $\pi \notin \normalfont\textsc{MinimaxRegret}$ so it achieves more than $M-C'$ regret and $\EO\limits_{\vec{\theta} \sim \tilde{D}_{\pi}}[U^{\vec{\theta}}(\pi)]<C$.
    
    With this case out of the way, we can use the definition of $\Lambda$ and simplify to show the desired result.
    \begin{align}
        \argmax_{\pi \in \Pi}\{\EO\limits_{\vec{\theta} \sim \Lambda(\pi)}[U^{\vec{\theta}}(\pi)]\} &= \argmax_{\pi \in \Pi}\{\frac{U^{\overline{\theta}_{\pi}}(\pi)c_\pi}{v_\pi}+C(1-\frac{c_\pi}{v_\pi})\}\\
        &= \argmax_{\pi \in \Pi}\{\frac{v_\pi c_\pi + Cc_\pi}{v_\pi}+(\frac{Cv_\pi -Cc_\pi}{v_\pi})\}\\
        &= \argmax_{\pi \in \Pi}\{\frac{v_\pi c_\pi+ C v_\pi}{v_\pi}\}\\
        &= \argmax_{\pi \in \Pi}\{c_\pi+C\}\\
        &= \argmax_{\pi \in \Pi}\{c_\pi\}\\
        &= \argmax_{\pi \in \Pi}\{\min\limits_{\vec{\theta}\in \Theta^T}\{-\normalfont\textsc{Regret}(\pi,\vec{\theta})\}\}\\
        &= \argmin_{\pi \in \Pi}\{\max\limits_{\vec{\theta}\in \Theta^T}\{\normalfont\textsc{Regret}(\pi,\vec{\theta})\}\}
    \end{align}
\end{proof}

\section{Minimax Regret Always Succeeds when There is a Clear Notion of Success}
In this section we will show that when there is a sufficiently strong notion of success and failure, and there is a policy which can ensure success, minimax regret will choose a successful strategy.  Moreover, we will show that neither pure randomization, nor maximin has this property.

\begin{customthm}{\ref{success:theorem}}
 Suppose that all achievable rewards fall into one of two class of outcomes labeled \textbf{SUCCESS} giving rewards in $[\textbf{S}_{min}, \textbf{S}_{max}]$ and \textbf{FAILURE} giving rewards in $[\textbf{F}_{min}, \textbf{F}_{max}]$, such that $\textbf{F}_{min} \leq \textbf{F}_{max} < \textbf{S}_{min}\leq \textbf{S}_{max}$.  In addition assume that the range of possible rewards in either class is smaller than the difference between the classes so we have $\textbf{S}_{max} - \textbf{S}_{min} < \textbf{S}_{min} - \textbf{F}_{max}$ and $ \textbf{F}_{max} - \textbf{F}_{min} < \textbf{S}_{min} - \textbf{F}_{max}$. Further suppose that there is a policy $\pi$ which succeeds on any $\vec{\theta}$ whenever success is possible. Then minimax regret will choose a policy which has that property.
 
 Further suppose that there is a policy $\pi$ which succeeds on any $\vec{\theta}$ whenever any policy succeeds on $\vec{\theta}$. Then minimax regret will choose a policy which has that property.
\end{customthm}
\begin{proof}
     Let $C = \textbf{S}_{min} - \textbf{F}_{max}$ and let $\pi^*$ be one of the policies that succeeds on any $\vec{\theta}$ whenever success is possible.  By assumption $\textbf{S}_{max} - \textbf{S}_{min} < C$ and $ \textbf{F}_{max} - \textbf{F}_{min} < C$.  Since $\pi^*$ succeeds whenever success is possible we have:
     \[
     \max\limits_{\vec{\theta}\in \Theta^T}\{\normalfont\textsc{Regret}(\pi^* ,\vec{\theta})\} < C
     \]
    
     Thus any strategy which achieves minimax regret must have regret less than $C$, since it must do at least as well as $\pi^*$.  Suppose $\pi \in \normalfont\textsc{MinimaxRegret}$, then if $\pi$ does not solve some solvable $\vec{\theta}$ then $\normalfont\textsc{Regret}(\pi^*,\vec{\theta}) > C$.  Thus every minimax regret policy succeeds whenever that is possible.
\end{proof}

Though minimax regret has this property, the other decision rules considered do not.  For example, consider the task described by Table \ref{small_game}, where an entry in position $(i,j)$ shows the payoff $U^{\theta_j}(\pi_i)$ obtained for the policy $\pi_i$ in row $i$, in the environment parameterized by $\theta_j$ in column $j$. If you choose $\pi_B$ you can ensure success whenever any policy can ensure success, but maximin will choose $\pi_A$, as it only cares about the worst case outcome.
\begin{table}[h!]
\begin{subtable}{.4\linewidth}
\begin{center}
\begin{tabular}{ c | c  | c }
        & $\theta_1$ & $\theta_2$ \\
 \hline
 $\pi_A$ & 0 &  0 \\ 
  \hline
 $\pi_B$ & 100 & -1    
\end{tabular}
\caption{}
\label{small_game}
\end{center}
\end{subtable}%
\begin{subtable}{.4\linewidth}
\begin{center}
\begin{tabular}{ c | c  | c | c | c | c}
        & $\theta_1$ & $\theta_2$ & $\theta_3$ & $\theta_4$ & $\theta_5$  \\
 \hline
 $\pi_A$ & 75 &  75 & 75 &  75 & 75 \\ 
  \hline
 $\pi_B$ & 0 & 100 & 100 & 100 & 100  \\
 \hline
 $\pi_C$ & 100 & 0 & 100 & 100 & 100  \\
 \hline
 $\pi_D$ & 100 & 100 & 0 & 100 & 100  \\
 \hline
 $\pi_E$ & 100 & 100 & 100 & 0 & 100  \\
 \hline
 $\pi_F$ & 100 & 100 & 100 & 100 & 0  \\
\end{tabular}
\caption{}
\label{big_game}
\end{center}
\end{subtable}
\caption{In these setting, \textbf{SUCCESS} is scoring in the range $[75,100]$ and \textbf{FAILURE} is scoring in the range $[-1,0]$}
\end{table}

  Choosing the policy that maximizes the expected return fails in the game described by Table \ref{big_game}.  Under a uniform distribution all of $\pi_B,\pi_C,\pi_D,\pi_E,\pi_F$ give an expected value of $80$, while $\pi_A$ gives an expected value of $75$.  Here maximizing expected value on a uniform distribution will choose one of $\pi_B,\pi_C,\pi_D,\pi_E,\pi_F$ while $\pi_A$ guarantees success.  Moreover, this holds for every possible distribution over parameters, since $\pi_A$ will always give an expected value of $75$ and on average $\pi_B,\pi_C,\pi_D,\pi_E,\pi_F$ will give an expected value of $80$, so at least one of them gives above $80$.

\section{Nash solutions to PAIRED}
 \label{Nash_PAIRED}
In this section we show how PAIRED works when coordination is not achieved.  The following proof shows that if the antagonist, protagonist, and adversary find a Nash Equilibrium, then the protagonist performs better than or equal to the antagonist in every parameterization.

\begin{theorem}
    Let $(\pi^P,\pi^A,\vec{\theta})$ be in Nash equilibria. Then for all $\vec{\theta}'$:  $\Uf{\vec{\theta}'}(\pi_P) \geq \Uf{\vec{\theta}'}(\pi_A)$.
\end{theorem}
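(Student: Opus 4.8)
The plan is to unpack the three best-response conditions implied by the Nash equilibrium $(\pi^P,\pi^A,\vec{\theta})$ and combine them. Recall that $\normalfont\textsc{Regret}^{\vec{\theta}}(\pi^P,\pi^A) = \Uf{\vec{\theta}}(\pi^A) - \Uf{\vec{\theta}}(\pi^P)$, that the protagonist is rewarded with $-\normalfont\textsc{Regret}$, and that both the antagonist and the adversary are rewarded with $+\normalfont\textsc{Regret}$. The equilibrium gives one best-response condition per player, and the argument is essentially to read off what each condition says about $\Uf{\vec{\theta}}$ and then glue them together.

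First I would use the two agents' conditions to pin down the value of the equilibrium regret. Holding $\pi^A$ and $\vec{\theta}$ fixed, the protagonist's best response minimizes $\normalfont\textsc{Regret}^{\vec{\theta}}(\cdot,\pi^A)$; since only the term $-\Uf{\vec{\theta}}(\pi^P)$ depends on $\pi^P$, this is exactly maximizing $\Uf{\vec{\theta}}(\pi^P)$ over the shared policy class. Symmetrically, holding $\pi^P$ and $\vec{\theta}$ fixed, the antagonist's best response maximizes $\normalfont\textsc{Regret}^{\vec{\theta}}(\pi^P,\cdot)$, i.e. maximizes $\Uf{\vec{\theta}}(\pi^A)$ over the same class. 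Hence both $\pi^P$ and $\pi^A$ attain $\max_{\pi}\Uf{\vec{\theta}}(\pi)$, so $\Uf{\vec{\theta}}(\pi^P)=\Uf{\vec{\theta}}(\pi^A)$ and the equilibrium regret equals $0$.

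Next I would invoke the adversary's condition. Holding $\pi^P$ and $\pi^A$ fixed, the adversary's best response chooses $\vec{\theta}$ to maximize $\normalfont\textsc{Regret}^{\vec{\theta}}(\pi^P,\pi^A)$ over all parameter trajectories. Combined with the previous step this yields $\max_{\vec{\theta}'}\normalfont\textsc{Regret}^{\vec{\theta}'}(\pi^P,\pi^A) = \normalfont\textsc{Regret}^{\vec{\theta}}(\pi^P,\pi^A) = 0$. Therefore for every $\vec{\theta}'$ we have $\normalfont\textsc{Regret}^{\vec{\theta}'}(\pi^P,\pi^A) = \Uf{\vec{\theta}'}(\pi^A) - \Uf{\vec{\theta}'}(\pi^P) \leq 0$, which rearranges to $\Uf{\vec{\theta}'}(\pi^P) \geq \Uf{\vec{\theta}'}(\pi^A)$, the claimed inequality.

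The main obstacle — really the only nontrivial point — is the middle step: arguing that the equilibrium regret is exactly zero rather than merely non-positive. This rests on the fact that the protagonist and antagonist optimize the same utility $\Uf{\vec{\theta}}$ over the same policy set, so their two argmaxes coincide in value. I would therefore state explicitly the (mild) assumption that both agents draw from a common policy class $\Pi$, since otherwise $\max_{\pi}\Uf{\vec{\theta}}(\pi)$ computed over $\Pi^P$ and over $\Pi^A$ need not agree and the equilibrium regret could be strictly negative. Once zero equilibrium regret is secured, the adversary's optimality propagates the bound to every $\vec{\theta}'$ with no further estimates required.
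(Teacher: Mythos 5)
Your proof is correct and follows essentially the same route as the paper's: bound the regret at the equilibrium parameters using the agents' best-response conditions, then use the adversary's optimality over $\vec{\theta}$ to propagate that bound to every $\vec{\theta}'$. The only difference is cosmetic --- the paper obtains $\normalfont\textsc{Regret}^{\vec{\theta}}(\pi^P,\pi^A)\leq 0$ at equilibrium by observing that the protagonist could imitate the antagonist, whereas you derive the slightly stronger fact that it equals zero because both agents optimize the same utility over the same policy class; either version suffices, and both rest on the shared-policy-class assumption you rightly flag.
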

\begin{proof}
Let $(\pi^P,\pi^A,\vec{\theta})$ be in Nash equilibria.  Then regardless of $\pi^A,\vec{\theta}$, the protagonist always has the choice to play $\pi^P = \pi^A$, so $\normalfont\textsc{Regret}(\pi^P, \pi^A, \vec{\theta}) \leq 0$.  Since, the adversary chooses $\vec{\theta}$ over any other $\vec{\theta}'$ we know that $\normalfont\textsc{Regret}(\pi^P, \pi^A, \vec{\theta}') \leq 0$ must hold for all $\vec{\theta}'$. Thus by the definition of Regret $\Uf{\vec{\theta}'}(\pi_P) \geq \Uf{\vec{\theta}'}(\pi_A)$. 
\end{proof}

This shows that with a capable antagonist the protagonist would learn the minimax regret policy, even without the adversary and the antagonist coordinating, and suggests that methods which strengthen the antagonist could serve to improve the protagonist.

\section{Additional experiments}
\subsection{Alternative methods for computing regret}
\label{sec:app_combined}
Given the arguments in Appendix \ref{Nash_PAIRED}, we experimented with additional methods for approximating the regret, which attempt to improve the antagonist. We hypothesized that using a fixed agent as the antagonist could potentially lead to optimization problems in practice; if the antagonist became stuck in a local minimum and stopped learning, it could limit the complexity of the environments the adversary could generate. Therefore, we developed an alternative approach using population-based training (PBT), where for each environment designed by the adversary, each agent in the population collects a single trajectory in the environment. Then, the regret is computed using the difference between the maximum performing agent in the population, and the mean of the population. Assume there is a population of $K$ agents, and $i$ and $j$ index agents within the population. Then: 

\begin{align}
    \textsc{Regret}_{pop} = \max_{i} U(\tau^i) - \frac{1}{K} \sum_{j=1}^K [U(\tau^j)]
\end{align}

We also used a population of adversaries, where for each episode, we randomly select an adversary to generate the environment, then test all agents within that environment. We call this approach \textit{PAIRED Combined PBT}.

Since using PBT can be expensive, we also investigated using a similar approach in the case of a population of $K=2$ agents, and a single adversary. This is analogous to a version of PAIRED where there is no fixed antagonist, but the antagonist is flexibly chosen to be the currently best-performing agent. We call this approach \textit{Flexible PAIRED}. 

Figure \ref{fig:combined_flexible} plots the complexity results of these two approaches. We find that they both perform worse than the proposed version of PAIRED. Figure \ref{fig:combined_flexible_transfer} shows the transfer performance of both methods against the original PAIRED. Both methods achieve reasonable transfer performance, retaining the ability to solve complex test tasks like the labyrinth and maze (when domain randomization and minimax training cannot). However, we find that the original method for computing the regret outperforms both approaches. We note that both the combined population and flexible PAIRED approaches do not enable the adversary and antagonist to coordinate (since the antagonist does not remain fixed). Coordination between the antagonist and adversary is an assumption in Theorem 2 of Section \ref{sec:paired}, which shows that if the agents reach a Nash equilibrium, the protagonist will be playing the minimum regret policy. These empirical results appear to indicate that when coordination between the adversary and antagonist is no longer possible, performance degrades. Future work should investigate whether these results hold across more domains.

\captionsetup[sub]{font=small}
\begin{figure}
\centering
\begin{subfigure}{.25\textwidth}
  \centering
  \includegraphics[width=\linewidth]{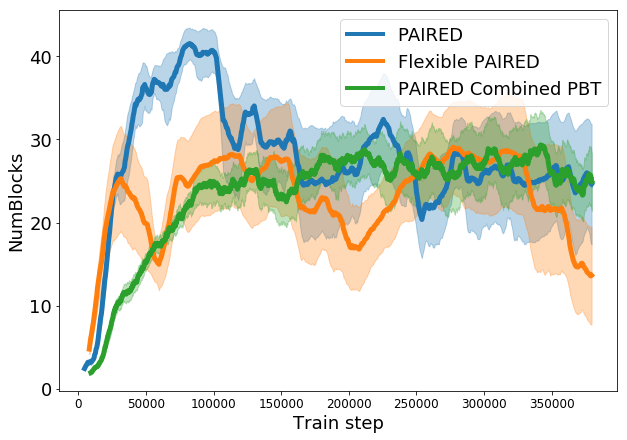}
  \caption{Number of blocks}
\end{subfigure}%
\begin{subfigure}{.25\textwidth}
  \centering
  \includegraphics[width=\linewidth]{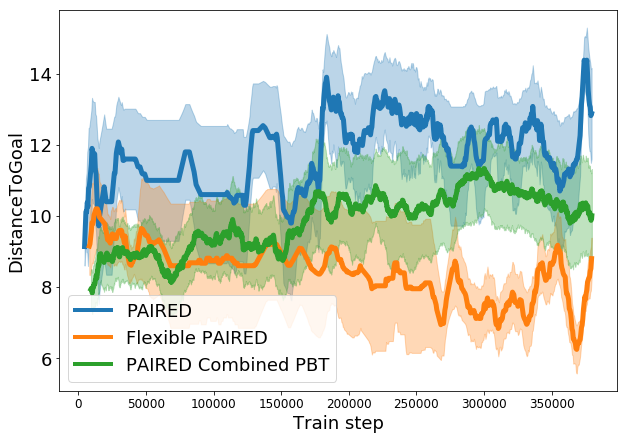}
  \caption{Distance to goal}
\end{subfigure}%
\begin{subfigure}{.25\textwidth}
  \centering
  \includegraphics[width=\linewidth]{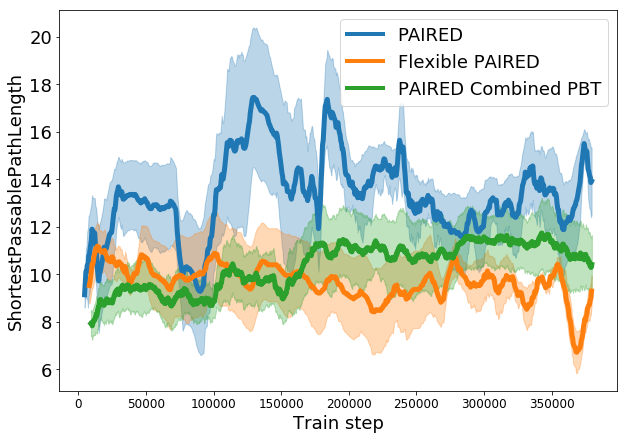}
  \caption{Passable path length}
\end{subfigure}%
\begin{subfigure}{.25\textwidth}
  \centering
  \includegraphics[width=\linewidth]{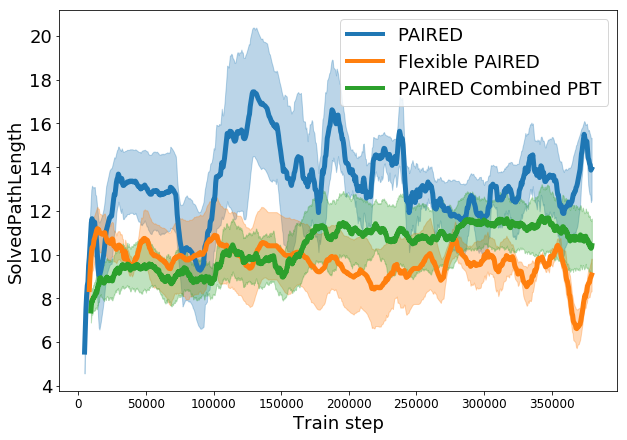}
  \caption{Solved path length}
\end{subfigure}
\caption{Comparison of alternative methods for computing the regret in terms of the statistics of generated environments (a-c), and the length of mazes that agents are able to solve (d). Neither the combined population or flexible approach (which both break the coordination assumption in Theorem 2) show improved complexity. Statistics of generated and solved environments measured over five random seeds; error bars are a 95\% CI.}
\label{fig:combined_flexible}
\end{figure}

\begin{figure}
\centering
\begin{subfigure}{.16\textwidth}
  \centering
  \includegraphics[width=\linewidth]{figures/transfer_envs/empty.png}
\end{subfigure}%
\begin{subfigure}{.16\textwidth}
  \centering
  \includegraphics[width=\linewidth]{figures/transfer_envs/cluttered50.png}
\end{subfigure}%
\begin{subfigure}{.16\textwidth}
  \centering
  \includegraphics[width=\linewidth]{figures/transfer_envs/fourrooms.png}
\end{subfigure}%
\begin{subfigure}{.16\textwidth}
  \centering
  \includegraphics[width=\linewidth]{figures/transfer_envs/sixteenroomsfewerdoors.png}
\end{subfigure}%
\begin{subfigure}{.16\textwidth}
  \centering
  \includegraphics[width=\linewidth]{figures/transfer_envs/labyrinth.png}
\end{subfigure}%
\begin{subfigure}{.16\textwidth}
  \centering
  \includegraphics[width=\linewidth]{figures/transfer_envs/maze.png}
\end{subfigure}

\begin{subfigure}{.16\textwidth}
  \centering
  \includegraphics[width=\linewidth]{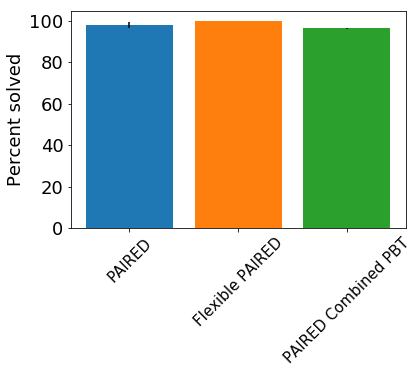}
  \caption{Empty}
\end{subfigure}%
\begin{subfigure}{.16\textwidth}
  \centering
  \includegraphics[width=\linewidth]{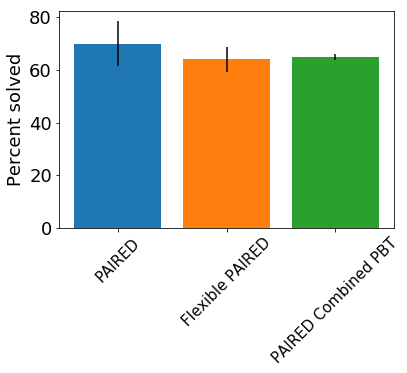}
  \caption{50 Blocks}
\end{subfigure}%
\begin{subfigure}{.16\textwidth}
  \centering
  \includegraphics[width=\linewidth]{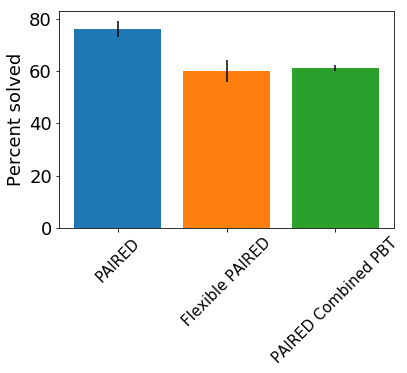}
  \caption{4 Rooms}
\end{subfigure}%
\begin{subfigure}{.16\textwidth}
  \centering
  \includegraphics[width=\linewidth]{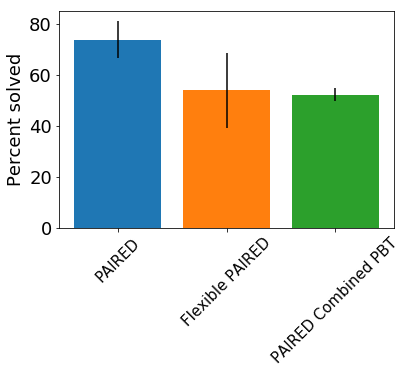}
  \caption{16 Rooms}
\end{subfigure}%
\begin{subfigure}{.16\textwidth}
  \centering
  \includegraphics[width=\linewidth]{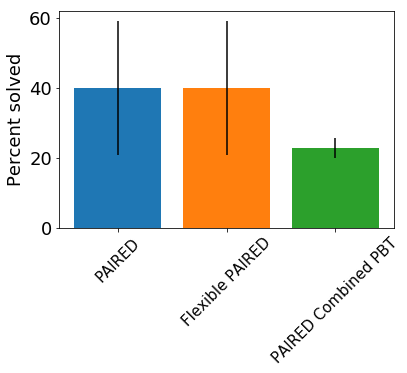}
  \caption{Labyrinth}
\end{subfigure}%
\begin{subfigure}{.16\textwidth}
  \centering
  \includegraphics[width=\linewidth]{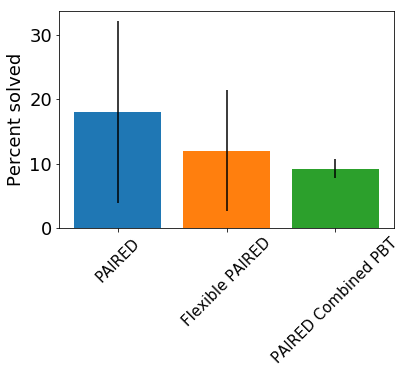}
  \caption{Maze}
\end{subfigure}
\caption{
Comparison of transfer performance for alternative methods for computing the regret. While both alternatives still retain the ability to solve complex tasks, achieving superior performance to minimax and domain randomization, their performance is not as good as the proposed method for computing the regret.}
\label{fig:combined_flexible_transfer}
\end{figure}

\subsection{Should agents themselves optimize regret?}
We experimented with whether the protagonist and antagonist should optimize for regret, or for the normal reward supplied by the environment (note that the environment-generating adversary always optimizes for the regret). For both the protagonist and antagonist, the regret is based on the difference between their reward for the current trajectory, and the max reward of the other agent received over several trajectories played in the current environment. Let the current agent be $A$, and the other agent be $O$. Then agent $A$ should receive reward $R^A = U(\tau^A) - \max_{\tau^O} U(\tau^O)$ for episode $\tau^A$. However, this reward is likely to be negative. If given at the end of the trajectory, it would essentially punish the agent for reaching the goal, making learning difficult. Therefore we instead compute a per-timestep penalty of $\frac{\max_{\tau^O} U(\tau^O)}{T}$, where $T$ is the maximum length of an episode. This is subtracted post-hoc from each agent's reward at every step during each episode, after the episode has been collected but before the agent is trained on it. When the agent reaches the goal, it receives the normal Minigrid reward for successfully navigating to the goal, which is $R = 1 - 0.9 * (M / T)$, where $M$ is the timestep on which the agent found the goal. 

Figure \ref{fig:regret_v_not} shows the complexity results for the best performing hyperparameters in which the protagonist and antagonist optimized the regret according to the formula above, or whether they simply learned according to the environment reward. Note that in both cases, the environment-generating adversary optimizes the regret of the protagonist with respect to the antagonist. As is evident in Figure \ref{fig:regret_v_not}, training agents on the environment reward itself, rather than the regret, appears to be more effective for learning complex behavior. We hypothesize this is because the regret is very noisy. The performance of the other agent is stochastic, and variations in the other agent's reward are outside of the agent's control. Further, agents do not receive observations about the other agent, and cannot use them to determine what is causing the reward to vary. However, we note that optimizing for the regret can provide good transfer performance. The transfer plots in Figure \ref{fig:transfer} were created with an agent that optimized for regret, as we describe below. It is possible that as the other agent converges, the regret provides a more reliable signal indicating when the agent's performance is sub-optimal.

\begin{figure}
\centering
\begin{subfigure}{.25\textwidth}
  \centering
  \includegraphics[width=\linewidth]{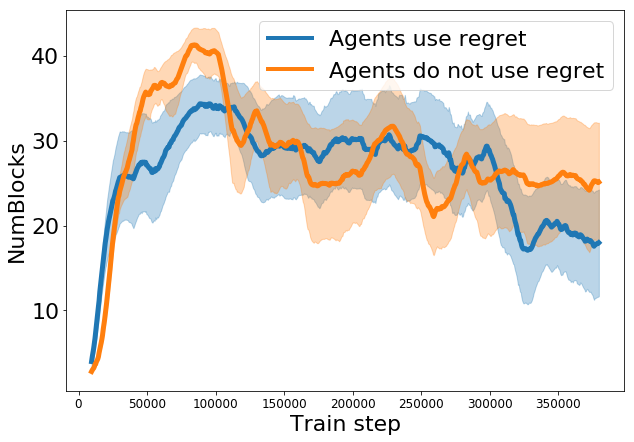}
  \caption{Number of blocks}
\end{subfigure}%
\begin{subfigure}{.25\textwidth}
  \centering
  \includegraphics[width=\linewidth]{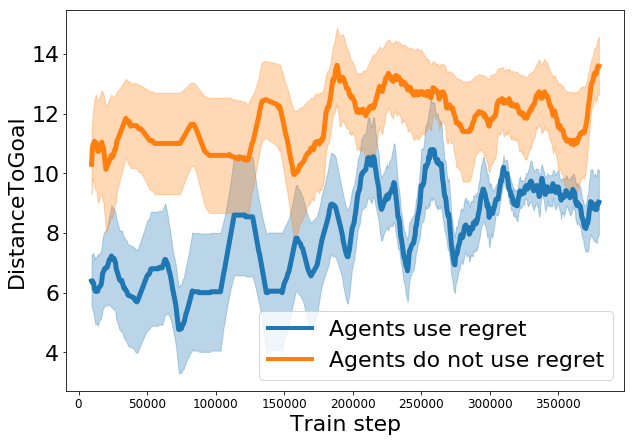}
  \caption{Distance to goal}
\end{subfigure}%
\begin{subfigure}{.25\textwidth}
  \centering
  \includegraphics[width=\linewidth]{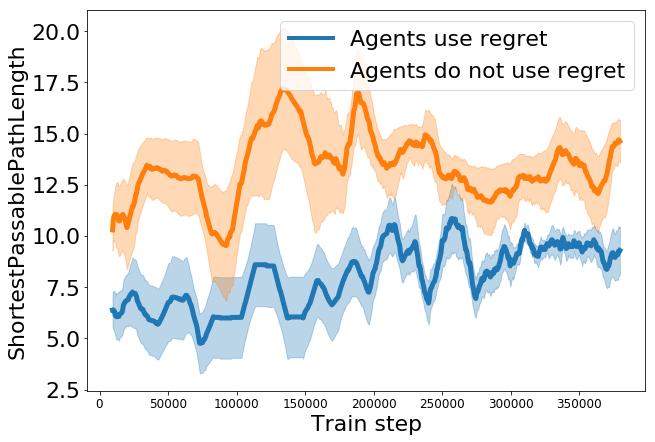}
  \caption{Passable path length}
\end{subfigure}%
\begin{subfigure}{.25\textwidth}
  \centering
  \includegraphics[width=\linewidth]{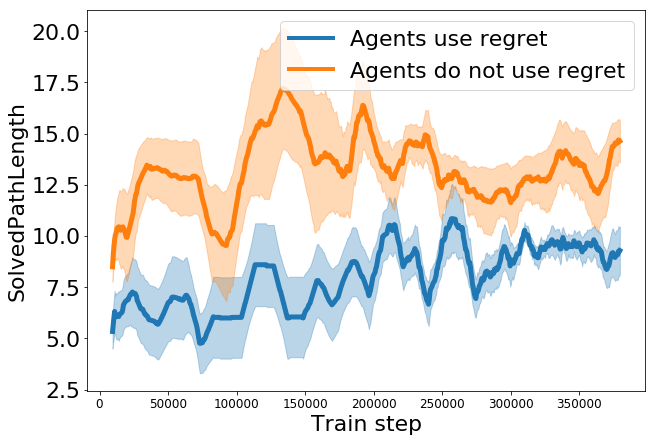}
  \caption{Solved path length}
\end{subfigure}
\caption{Comparison of alternative methods for computing the regret in terms of the statistics of generated environments (a-c), and the length of mazes that agents are able to solve. Neither the combined population or flexible approach show improved complexity. Statistics of generated and solved environments measured over five random seeds; error bars are a 95\% CI.}
\label{fig:regret_v_not}
\end{figure}

\section{Experiment Details and Hyperparameters}
\label{sec:app_hparams}

\subsection{Navigation Experiments}
\subsubsection{Agent parameterization}
The protagonist and antagonist agents for both PAIRED and the baselines received a partially observed, $5\times5\times3$ view of the environment, as well as an integer ranging from $0-3$ representing the direction the agent is currently facing. The agents use a network architecture consisting of a single convolutional layer which connects to an LSTM, and then to two fully connected layers which connect to the policy outputs. A second network with identical parameters is used to estimate the value function. The agents use convolutional kernels of size $3$ with $16$ filters to input the view of the environment, an extra fully-connected layer of size $5$ to process the direction and input it to the LSTM, an LSTM of size $256$, and two fully connected layers of size $32$ which connect to either the policy outputs or the value estimate. The best entropy regularization coefficient was found to be $0.0$. All agents (including environment adversaries) are trained with PPO with a discount factor of $0.995$, a learning rate of $0.0001$, and $30$ environment workers operating in parallel to collect a batch of episodes, which is used to complete one training update.

\subsubsection{Adversary parameterization}
The environments explored in this paper are a $15\times15$ tile discrete grid, with a border of walls around the edge. This means there are $13\times13=169$ free tiles for the adversary to use to place obstacles. We parameterized the adversary by giving it an action space of dimensionality 169, and each discrete action indicates the location of the next object to be placed. It plays a sequence of actions, such that on the first step it places the agent, on the second it places the goal, and for $50$ steps afterwards it places a wall (obstacle). If the adversary places an object on top of a previously existing object, its action does nothing; this allows it to place fewer than $50$ obstacles. If it tries to place the goal on top of the agent, the goal will be placed randomly.

We also explored an alternative parameterization, in which the adversary had only $4$ actions, which corresponded to placing the agent, goal, an obstacle, or nothing. It then took a sequence of $169$ steps to place all objects, moving through the grid from top to bottom and left to right. If it chose to place the agent or goal when they had already been placed elsewhere in the map, they would be moved to the current location. This parameterization allows the adversary to place as many blocks as there are squares in the map. However, we found that adversaries trained with this parameterization drastically underperformed the alternative version used in the paper, scoring an average solved path length of $\approx2$, as opposed to $\approx15$. We hypothesize this is because when the adversary is randomly initialized, sampling from its random policy is more likely to produce impossible environments. This makes it impossible for the agents to learn, and cannot provide a regret signal to the adversary to allow it to improve. We suggest that when designing an adversary parameterization, it may be important to ensure that sampling from a random adversary policy can produce feasible environments. 

The environment-constructing adversary's observations consist of a $15\times15\times3$ image of the state of the environment, an integer $t$ representing the current timestep, and a random vector $z \sim \mathcal{N}(0,I), z \in \mathbb{R}^{50}$ to allow it to generate random mazes. Because the sequencing of actions is important, we experiment with using an RNN to parameterize the adversary as well, although we find it is not always necessary. The adversary architecture is similar to that of the agents; it consists of a single convolutional layer which connects to an LSTM, and then to two fully connected layers which connect to the policy outputs. Additional inputs such as $t$ and $z$ are connected directly to the LSTM layer. We use a second, identical network to estimate the value function. 
To find the best hyperparameters for PAIRED and the baselines, we perform a grid search over the number of convolution filters used by the adversary, the degree of entropy regularization, which of the two parameterizations to use, and whether or not to use an RNN, and finally, the number of steps the protagonist is given to find the goal (we reasoned that lowering the protagonist episode length relative to the antagonist length would make the regret a less noisy signal). These parameters are shared between PAIRED and the baseline Minimax adversary, and we sweep all of these parameters equally for both. For PAIRED, we also experiment with whether the protagonist and antagonist optimize regret, and with using a non-negative regret signal, \textit{i.e.} $\normalfont\textsc{Regret} = \max(0, \normalfont\textsc{Regret})$, reasoning this could also lead to a less noisy reward signal.

For the complexity experiments, we chose the parameters that resulted in the highest solved path length in the last 20\% of training steps. The best parameters for PAIRED were an adversary with $128$ convolution filters, entropy regularization coefficient of $0.0$, protagonist episode length of $250$, non-negative regret, and the agents did not optimize regret. The best parameters for the minimax adversary were $256$ convolution filters, entropy regularization of $0.0$, and episode length of $250$. For the Population-based-training (PBT) Minimax experiment, the best parameters were an adversary population of size $3$, and an agent population of size $3$. All adversaries used a convolution kernel size of $3$, an LSTM size of $256$, two fully connected layers of size $32$ each, and a fully connected layer of size $10$ that inputs the timestep and connects to the LSTM. 

For the transfer experiments, we first limited the hyperparameters we searched based on those which produced the highest adversary reward, reasoning that these were experiments in which the optimization objective was achieved most effectively. We tested a limited number of hyperparameter settings on a set of validation environments, consisting of a different maze and labyrinth, mazes with 5 blocks and 40 blocks, and a nine rooms environment. We then tested these parameters on novel test environments to produce the scores in the paper. The best parameters for the PAIRED adversary were found to be $64$ convolution filters, entropy regularization of $0.1$, no RNN, non-negative regret, and having the agents themselves optimize regret. The best parameters for the minimax adversary in this experiment were found to be the same: $64$ convolution filters, entropy regularization of $0.1$, and no RNN. 

\subsection{Hopper Experiments}
The hopper experiments in this paper are in the standard MuJoCo~\citep{todorov2012mujoco} simulator.  The adversary is allowed to apply additional torques to the joints of the agent at a some proportion of the original agent's strength $\alpha$.  The torques that are applied at each time step are chosen by the adversary independent of the state of the environment, and in the PAIRED experiments, both the protagonist and the antagonist are given the same torques.  

The adversaries observation consists of only of the the time step. 
Each policy is a DNN with two hidden layers each with width 32, $tanh$ activation internally and a linear activation on the output layer.  They were trained simultaneously using PPO~\citep{schulman2017proximal} and an schedule in the adversary strength parameter $\alpha$ is scaled from 0.1 to 1.0 over the course of 300 iterations, after which training continues at full strength. We pre-train agents without any adversary for 100 iterations.

Hyperparameter tuning was conducted over the learning rate values [5e-3, 5e-4, 5e-5] and the GAE lambda values $[0.5, 0.9]$.  The minimax adversary worked best with a leaning rate of 5e-4 and a lambda value of $0.5$; PAIRED worked best with a leaning rate of 5e-3 and a lambda value of $0.09$.  Otherwise we use the standard hyperparameters in Ray 0.8.

\end{appendix}

\end{document}